\patchcmd{\ttlh@hang}{\parindent\z@}{\parindent\z@\leavevmode}{}{}
\patchcmd{\ttlh@hang}{\noindent}{}{}{}
\def\compactify{\itemsep=0pt \topsep=0pt \partopsep=0pt \parsep=0pt}
\let\latexusecounter=\usecounter
\definecolor{darkred}{rgb}{0.7,0,0}
\definecolor{darkgreen}{rgb}{0,0.4,0}
\def\underbrace#1{%
   \@ifnextchar_{\tikz@@underbrace{#1}}{\tikz@@underbrace{#1}_{}}}
\def\tikz@@underbrace#1_#2{%
   \tikz[baseline=(a.base)] {\node[inner sep=2] (a) {\(#1\)};
   \draw[line cap=round,decorate,decoration={brace,amplitude=5pt}]
     (a.south east) -- node[pos=0.5,below,inner sep=7pt] {\(\scriptstyle #2\)} (a.south west);}}
\renewcommand\cite{\citep}
\newtheorem{definition}{Definition}
\newtheorem{lemma}{Lemma}
\newtheorem{thm}{Theorem}
\newtheorem{prop}{Proposition}
\newcommand\given[1][]{\:#1\vert\:}
\newcommand\norm[1]{\left\lVert#1\right\rVert}
\newcommand{\obs}{\omega}
\title{Variance Reduction for Reinforcement Learning in Input-Driven Environments}
\author{Hongzi Mao,\, Shaileshh Bojja Venkatakrishnan,\, Malte Schwarzkopf,\, Mohammad Alizadeh\\
MIT Computer Science and Artificial Intelligence Laboratory\\
\texttt{\{hongzi,bjjvnkt,malte,alizadeh\}@csail.mit.edu}
 }
\begin{document}

\maketitle

\vspace{-0.5cm}
\begin{abstract}
\vspace{-0.2cm}
We consider reinforcement learning in input-driven environments, where
an exogenous, stochastic input process affects the dynamics of the system. Input processes arise in many applications, including queuing systems, robotics control with disturbances, and object tracking. 
Since the state dynamics and rewards depend on the input process, the state alone provides limited information for the expected future returns. Therefore, policy gradient methods with standard state-dependent baselines suffer high variance during training.
We derive a bias-free, input-dependent baseline to reduce this variance,
and analytically show its benefits over state-dependent baselines.
We then propose a meta-learning approach to overcome the complexity of learning a baseline that depends on a long sequence of inputs.
Our experimental results show that across environments from queuing systems, computer networks, and MuJoCo robotic locomotion, input-dependent baselines consistently improve training stability and result in better eventual policies. 

\end{abstract}

\section{Introduction}
\label{s:intro}

Deep reinforcement learning (RL) has emerged as a powerful approach for sequential decision-making problems, achieving impressive results in domains such as game playing~\cite{atari, alphagozero} and robotics~\cite{levine16, trpo, lillicrap2015continuous}. This paper concerns RL in {\em input-driven environments}. Informally, input-driven environments have dynamics that are partially dictated by an exogenous, stochastic {\em input process}. Queuing systems~\cite{kleinrock1976queueing, kelly2011reversibility} are an example; their dynamics are governed by not only the decisions made within the system (e.g., scheduling, load balancing) but also the arrival process that brings work (e.g., jobs, customers, packets) into the system.
%
Input-driven environments also arise naturally in many other domains:
network control and optimization~\cite{remy, pensieve},
robotics control with stochastic disturbances~\cite{rarl},
locomotion in environments with complex terrains and obstacles~\cite{deepmind-rich-env}, vehicular traffic control~\cite{trafficcontrol, cathyflow},
tracking moving targets,
and more (see Figure~\ref{fig:environments}). 

We focus on model-free policy gradient RL algorithms~\cite{reinforce, a3c, trpo},
which have been widely adopted and benchmarked for a variety of RL
tasks~\cite{rl-benchmark, doom-rl}.
%
%
A key challenge for these methods is the high variance in the gradient estimates, as
such variance increases sample complexity and can impede effective learning~\cite{gae, a3c}.
A standard approach to reduce variance is to subtract a ``baseline'' from the total reward (or ``return'')
to estimate the policy gradient~\cite{tao}.
%
%
The most common choice of a baseline is the {\em value function}\,---\,the expected return starting from the state.
%






Our main insight is that a state-dependent baseline\,---\,such as the value function\,---\,is a poor choice in input-driven environments, whose state dynamics and rewards are partially dictated by the input process. In such environments, comparing the return to the value function baseline may provide limited information about the quality of actions.
The return obtained after taking a good action may be poor (lower than the baseline) if the input sequence following the action drives the system to unfavorable states; similarly, a bad action might end up with a high return with an advantageous input sequence. Intuitively, a good baseline for estimating the policy gradient should take the specific instance of the input process\,---\,the sequence of input values\,---\,into account. We call such a baseline an {\em input-dependent baseline}; it is a function of both the  state and the entire future input sequence.

We formally define input-driven Markov decision processes, and we prove that an input-dependent baseline does not introduce bias in standard policy gradient algorithms such as Advantage Actor Critic (A2C)~\cite{a3c} and Trust Region Policy Optimization (TRPO)~\cite{trpo}, provided that the input process is independent of the states and actions.
%
%
We derive the optimal input-independent baseline and a simpler one to work with in practice; this takes the form of a {\em conditional value function}\,---\,the expected return given the state and the future input sequence.

Input-dependent baselines are harder to learn than their state-dependent counterparts; they are high-dimensional functions of the sequence of  input values. To learn input-dependent baselines efficiently, we propose a simple approach based on meta-learning~\cite{maml, meta-survey}. The idea is to learn a ``meta baseline'' that can be specialized to a baseline for a specific input instantiation using a small number of training episodes with that input. This approach applies to applications in which an input sequence can be repeated during training, e.g., applications that use simulations or experiments with previously-collected input traces for training~\cite{trace-energy}.


We compare our input-dependent baseline to the standard value function baseline for the five tasks illustrated in Figure~\ref{fig:environments}. These tasks are derived from queuing systems (load balancing heterogeneous servers~\cite{load-balance-opt}), 
computer networks (bitrate adaptation for video streaming~\cite{pensieve}), and variants of standard continuous control RL benchmarks in the MuJoCo physics simulator~\cite{mujoco}. We adapted three widely-used MuJoCo benchmarks~\cite{rl-benchmark, meta-adapt, deepmind-rich-env} to add a stochastic input element that makes these tasks significantly more challenging. For example, we replaced the static target in a 7-DoF robotic arm target-reaching task with a randomly-moving target that the robot aims to track over time. Our results show that input-dependent baselines consistently provide improved training stability and better eventual policies. Input-dependent baselines are applicable to a variety of policy gradient methods, including A2C, TRPO, PPO, robust adversarial RL methods such as RARL~\cite{rarl}, and meta-policy optimization such as MB-MPO~\cite{mbmpo}. Video demonstrations of our experiments are available at \url{https://sites.google.com/view/input-dependent-baseline/}.

\begin{figure*}[t]
  \centering
  \includegraphics[width=\columnwidth]{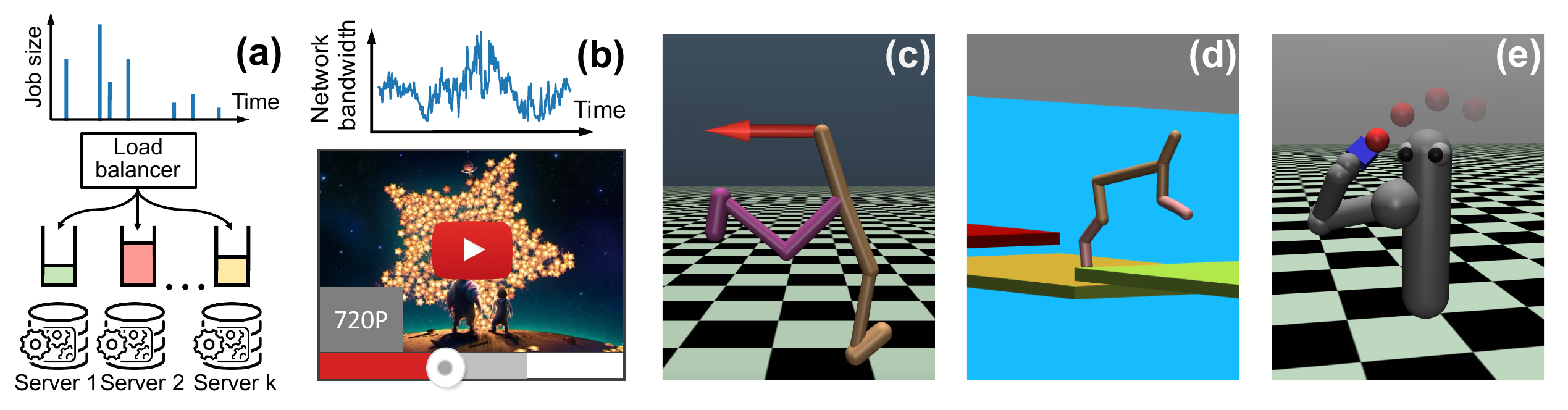}
  \vspace{-0.7cm}
  \caption{Input-driven environments: \emph{(a)} load-balancing heterogeneous servers~\cite{load-balance-opt} with stochastic job arrival as the input process; \emph{(b)} adaptive bitrate video streaming~\cite{pensieve} with stochastic network bandwidth as the input process; \emph{(c)} Walker2d in wind with a stochastic force (wind) applied to the walker as the input process; \emph{(d)} HalfCheetah on floating tiles with the stochastic process that controls the buoyancy of the tiles as the input process; \emph{(e)} 7-DoF arm tracking moving target with the stochastic target position as the input process.
  Environments \emph{(c)--(e)} use the MuJoCo physics simulator~\cite{mujoco}.}
  \vspace{-0.2cm}
  \label{fig:environments}
\end{figure*}

\section{Preliminaries}
\label{s:background}

\paragraph{Notation.}
\label{notation}
We consider a discrete-time Markov decision process (MDP), defined by
$(\mathcal{S}, \mathcal{A}, \mathcal{P}, \rho_0, r, \gamma)$, where
$\mathcal{S} \subseteq \mathbb{R}^n$ is a set of $n$-dimensional states,
$\mathcal{A} \subseteq \mathbb{R}^m$ is a set of $m$-dimensional actions,
$\mathcal{P}: \mathcal{S} \times \mathcal{A} \times \mathcal{S} \to [0, 1]$
is the state transition probability distribution,
$\rho_0: \mathcal{S} \to [0, 1]$ is the distribution over initial states,
$r: \mathcal{S} \times \mathcal{A} \to \mathbb{R}$ is the reward function, and
$\gamma \in (0, 1)$ is the discount factor. We denote a stochastic policy as
$\pi: \mathcal{S} \times \mathcal{A} \to [0, 1]$, which aims to optimize the
expected return $\eta({\pi}) = \mathbb{E}_\tau\left[\sum_{t=0}^\infty \gamma^tr(s_t, a_t)\right]$,
where $\tau = (s_0, a_0, ...)$ is the trajectory following
$s_0 \sim \rho_0$, $a_t \sim \pi(a_t | s_t)$, $s_{t+1} \sim \mathcal{P}(s_{t+1}|s_t, a_t)$.
We use $V_\pi(s_t) = \mathbb{E}_{a_t, s_{t+1}, a_{t+1}, ...}\left[\sum_{l=0}^\infty\gamma^l r(s_{t+l}, a_{r+l})|s_t\right]$ to define the value function, and $Q_\pi(s_t, a_t) = \mathbb{E}_{s_{t+1}, a_{t+1}, ...}\left[\sum_{l=0}^\infty\gamma^l r(s_{t+l}, a_{r+l})|s_t, a_t\right]$
to define the state-action value function.
For any sequence $(x_0, x_1, ...)$, we use $\bm{x}$ to denote the entire sequence
and $x_{i:j}$ to denote $(x_i, x_{i+1}, ..., x_j)$.

\paragraph{Policy gradient methods.}
\label{s:pg}
Policy gradient methods estimate the gradient of expected return with respect to the policy parameters~\cite{pg, kakade, gu17}. To train a policy $\pi_\theta$ parameterized by $\theta$, the Policy Gradient Theorem~\cite{pg} states that
\begin{align}
\nabla_\theta \eta(\pi_\theta) 
&= \mathbb{E}_{\substack{s\sim\rho_{\pi}\\ a\sim\pi_\theta}}\left[\nabla_\theta\log\pi_\theta(a|s)Q_{\pi_\theta}(s, a)\right]\label{eq:pg},
\end{align}
where $\rho_\pi(s) = \sum_{t=0}^\infty\left[\gamma^t\Pr(s_t=s)\right]$ denotes
the discounted state visitation frequency.  Practical algorithms often use the undiscounted state visitation frequency (i.e., $\gamma=1$ in $\rho_\pi$), which
can make the estimation slightly biased~\cite{thomas2014bias}.

Estimating the policy gradient using Monte Carlo estimation for the $Q$ function suffers from high variance~\cite{a3c}. To reduce variance, an appropriately chosen
baseline $b(s_t)$ can be subtracted from the
Q-estimate without introducing bias~\cite{greensmith}. The policy gradient estimation
 with a baseline in Equation~\eqref{eq:pg} becomes
 $\mathbb{E}_{\rho_\pi, \pi_\theta}\left[\nabla_\theta \log \pi_\theta (a|s) \left(Q_{\pi_\theta}(s, a) - b(s)\right)\right]$.
While an optimal baseline exists~\cite{greensmith, cathybaseline}, it is hard to
estimate and often replaced by the value function $b(s_t) = V_\pi(s_t)$~\cite{rlbook, a3c}.

\section{Motivating Example}
\label{s:example}

\begin{figure}[t]
  \centering
  \includegraphics[width=\textwidth]{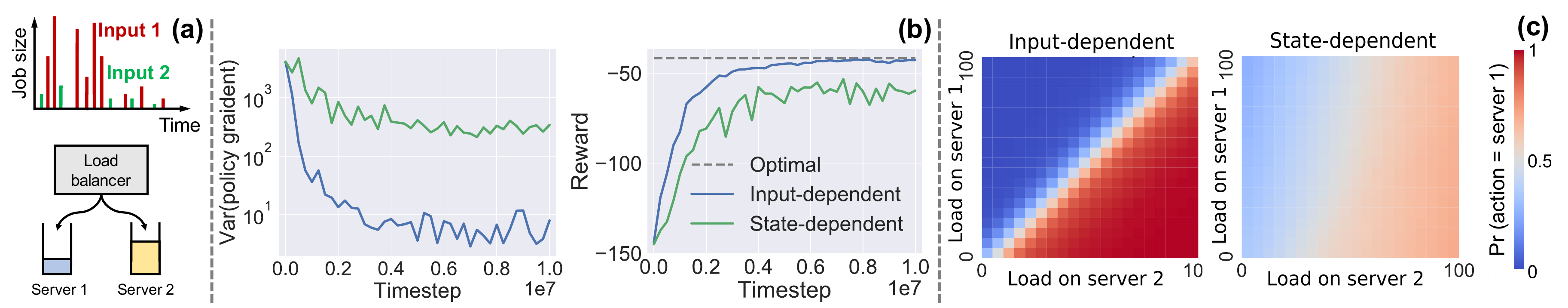}
  \vspace{-0.6cm}
  \caption{Load balancing over two servers. \emph{(a)} Job sizes follow a Pareto distribution and jobs arrive as a Poisson process; the RL agent observes the queue lengths and picks a server for an incoming job. \emph{(b)} The input-dependent baseline (blue) results in a 50$\times$ lower policy gradient variance (left) and a 33\% higher test reward (right) than the standard, state-dependent baseline (green). \emph{(c)} The probability heatmap of picking server 1 shows that using the input-dependent baseline (left) yields a more precise policy than using the state-dependent baseline (right).}
  \label{fig:load_balance_example}
\end{figure}

We use a simple load balancing example to
illustrate the variance introduced by an exogenous input process.
As shown in Figure~\ref{fig:load_balance_example}{\color{darkred}a}, jobs arrive over time and
a load balancing agent sends them to one of two servers.
The jobs arrive according to a Poisson process, and the job sizes follow a Pareto distribution. The two servers process jobs from their queues at identical rates.
On each job arrival, the load balancer observes state $s_t = (q_1, q_2)$, denoting the queue length at the two servers. It then takes an action $a_t \in \{1,2\}$, sending the job to one of the servers.
The goal of the load balancer is to minimize the average job completion time. The reward corresponding to this goal is $r_t = - \tau \times j$,
where $\tau$ is the time elapsed since the last action and $j$ is total number of enqueued jobs.
%

%

In this example, the optimal policy is to send the job to the server with the shortest queue~\cite{sjf-opt}. However, we find that a standard policy gradient algorithm, A2C~\cite{a3c}, trained using a value function baseline struggles to learn this policy. The reason is that the stochastic sequence of job arrivals creates huge variance in the reward signal, making it difficult to distinguish between good and bad actions. Consider, for example, an action at the state shown in Figure~\ref{fig:load_balance_example}{\color{darkred}a}. If the arrival sequence following this action consists of a burst of large jobs (e.g., input sequence 1 in Figure~\ref{fig:load_balance_example}{\color{darkred}a}), the queues will build up, and the return will be poor compared to the value function baseline (average return from the state). On the other hand, a light stream of jobs (e.g., input sequence 2 in Figure~\ref{fig:load_balance_example}{\color{darkred}a}) will lead to short queues and a better-than-average return. Importantly, this difference in return has little to do with the action; it is a consequence of the random job arrival process.


We train two A2C agents~\cite{a3c},
one with the standard value function baseline and
the other with an input-dependent baseline tailored for each specific instantiation of the job
arrival process (details of this baseline in~\S\ref{s:z-baseline}).
Since the the input-dependent baseline takes each input sequence into account explicitly,
it reduces the variance of the policy gradient estimation much more effectively
(Figure~\ref{fig:load_balance_example}{\color{darkred}b}, left).
As a result, even in this simple example, only the policy learned with the
input-dependent baseline comes close to the optimal
(Figure~\ref{fig:load_balance_example}{\color{darkred}b}, right).
Figure~\ref{fig:load_balance_example}{\color{darkred}c} visualizes the policies learned
using the two baselines. The optimal policy (pick-shortest-queue) corresponds to a clear
divide between the chosen servers at the diagonal.

%
%


In fact, the variance of the standard baseline can be arbitrarily worse than an input-dependent baseline: we refer the reader to Appendix~\ref{s:grid-world} for an analytical example on a 1D grid world.

\section{Reducing Variance for Input-Driven MDPs}
\label{s:z-baseline}

We now formally define input-driven MDPs and derive variance-reducing baselines for policy gradient methods in environments with input processes.





\begin{wrapfigure}{r}{0.28\textwidth}
  \centering
  \vspace{-0.8cm}
  \includegraphics[width=0.28\textwidth]{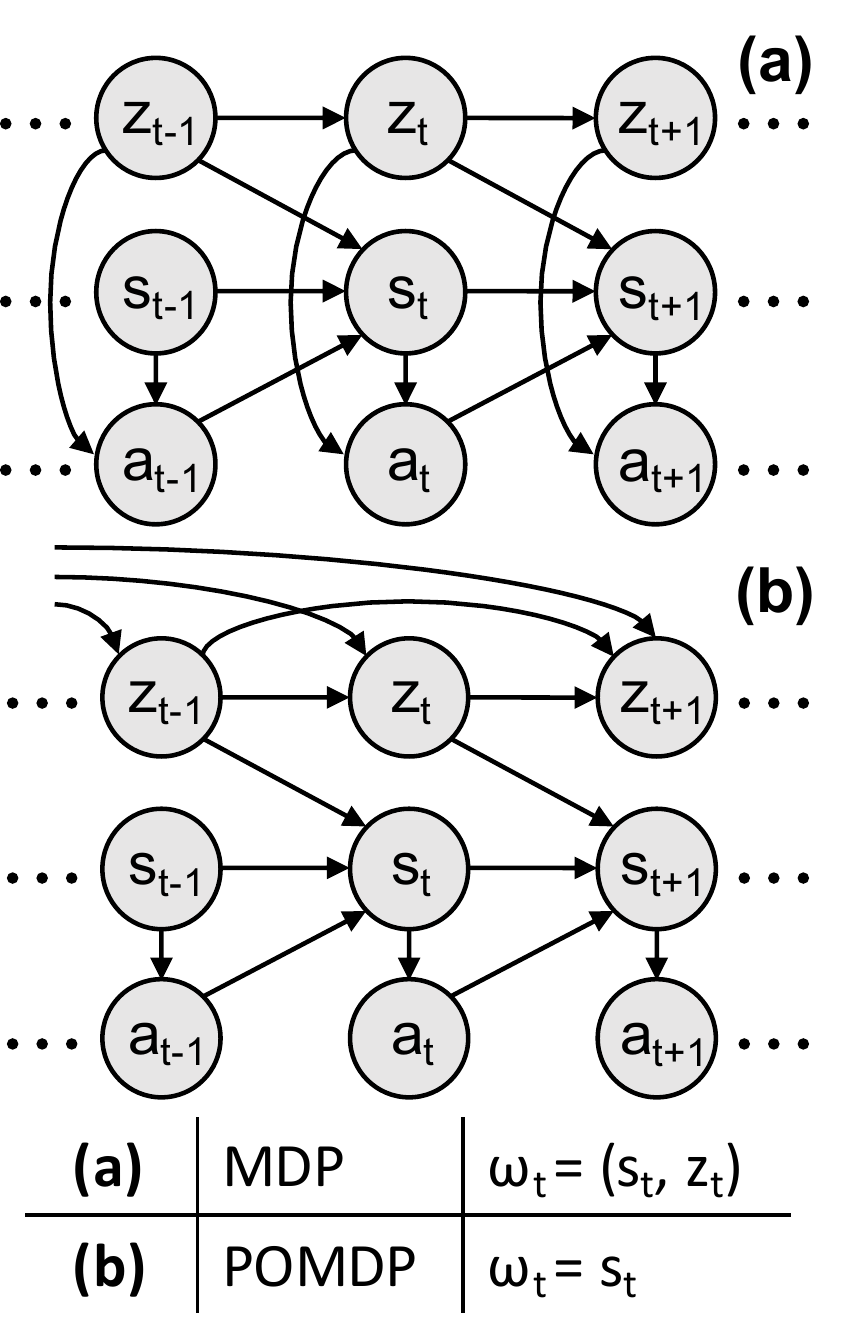}
  \vspace{-0.75cm}
  \caption{Graphical model of input-driven MDPs.}
  \vspace{-0.45cm}
  \label{fig:graphical-model}
\end{wrapfigure}

\begin{definition}
\label{def:input-driven-process}
An input-driven MDP is defined by $(\mathcal{S}, \mathcal{A}, \mathcal{Z}, \mathcal{P}_s, \mathcal{P}_z, \rho_0^s, \rho_0^z, r, \gamma)$, where $\mathcal{Z} \subseteq \mathbb{R}^k$ is a set of $k$-dimensional input values, $\mathcal{P}_s(s_{t+1}|s_t, a_t, z_t)$ 
is the transition kernel of the states, $\mathcal{P}_z(z_{t+1}|z_{0:t})$ is the transition kernel of the input process, $\rho_0^z(z_0)$ is the distribution of the initial input, $r(s_t,a_t,z_t)$ is the reward function, and $\mathcal{S}$, $\mathcal{A}$, $\rho_0^s$, $\gamma$ follow the standard definition in~\S\ref{notation}.
\end{definition}

An input-driven MDP adds an {\em input process}, $\bm{z} = (z_0, z_1, \cdots)$, to a standard MDP.
%
%
In this setting, the next state $s_{t+1}$ depends on $(s_t, a_t, z_t)$.
%
%
We seek to learn policies that maximize cumulative expected rewards.
We focus on two cases, corresponding to the graphical models shown in Figure~\ref{fig:graphical-model}:\\
{\em Case 1:} $z_t$ is a Markov process, and $\obs_t = (s_t, z_t)$ is observed at time $t$. The action $a_t$ can hence depend on both $s_t$ and $z_t$. \\
{\em Case 2:} $z_t$ is a general process (not necessarily Markov), and $\obs_t = s_t$ is observed at time $t$. The action $a_t$ hence depends only on $s_t$.





In Appendix~\ref{s:pf-input-driven-mdp}, we prove that case 1 corresponds to a fully-observable MDP. This is evident from the graphical model in Figure~\ref{fig:graphical-model}{\color{darkred}a} by considering $\obs_t = (s_t,z_t)$ to be the `state' of the MDP at time $t$.  Case 2, on the other hand, corresponds to a partially-observed MDP (POMDP) if we define the state to contain both $s_{t}$ and $z_{0:t}$, but leave $z_{0:t}$ \emph{unobserved} at time $t$ (see Appendix~\ref{s:pf-input-driven-mdp} for details).

\subsection{Variance Reduction}
\label{s:var-reduction}
%
%

In input-driven MDPs, the standard input-agnostic baseline is ineffective at reducing variance, as shown by our motivating example (\S\ref{s:example}). We propose to use an {\em input-dependent} baseline of the form $b(\obs_t, z_{t:\infty})$\,---\,a function of both the observation at time $t$ and the input sequence from $t$ onwards. An input-dependent baseline uses information that is not available to the policy. Specifically, the input sequence $z_{t:\infty}$ cannot be used when taking an action at time $t$, because $z_{t+1:\infty}$ has not yet occurred at time $t$. However, in many applications, the input sequence is known at training time. In some cases, we know the entire input sequence upfront, e.g., when training in a simulator. In other situations, we can record the input sequence on the fly during training. Then, after a training episode, we can use the recorded values, including those that occurred after time $t$, to compute the baseline for each step $t$.



%


%

We now analyze input-dependent baselines. Our main result is that input-dependent baselines are bias-free. We also derive the optimal input-dependent baseline for variance reduction. All the results hold for both cases in Figure~\ref{fig:graphical-model}. We first state two useful lemmas required for our analysis. The first lemma shows that under the input-driven MDP definition, the input sequence $z_{t:\infty}$ is conditionally independent of the action $a_t$ given the observation $\obs_t$, while the second lemma states the policy gradient theorem for input-driven MDPs.




\begin{lemma} \label{lem: markov chain}
$\Pr(z_{t:\infty}, a_t | \obs_t) = \Pr(z_{t:\infty}| \obs_t) \pi_\theta(a_t | \obs_t)$, i.e., $z_{t:\infty} - \obs_t - a_t$ forms a Markov chain.
\end{lemma}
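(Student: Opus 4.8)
The plan is to exploit the exogeneity of the input process encoded in Definition~\ref{def:input-driven-process}: the input transition kernel $\mathcal{P}_z(z_{t+1}\mid z_{0:t})$ conditions only on past inputs, never on states or actions. I would first set up a structural representation of the policy. At each step the action is drawn as $a_t = f(\obs_t, \xi_t)$, where $f$ is the deterministic sampling map induced by $\pi_\theta(\cdot\mid\obs_t)$ and $\xi_t$ is fresh randomness independent of the entire past $(s_{0:t}, a_{0:t-1})$ and of the input process $\bm{z}$. This representation holds verbatim in both cases of Figure~\ref{fig:graphical-model}, the only difference being whether $\obs_t = (s_t, z_t)$ in Case 1 or $\obs_t = s_t$ in Case 2; in either case $a_t$ is a function of $\obs_t$ and of noise independent of everything else.

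The core step is then to establish $a_t \perp z_{t:\infty} \mid \obs_t$. Since $a_t = f(\obs_t, \xi_t)$, once we condition on $\obs_t$ the action is a function of $\xi_t$ alone. Because $\xi_t$ is independent of $(\obs_t, z_{t:\infty})$ jointly (both are functions of the history together with $\bm{z}$, which $\xi_t$ is independent of), for any measurable set $A$ we obtain $\Pr(a_t \in A \mid \obs_t, z_{t:\infty}) = \Pr(f(\obs_t,\xi_t) \in A \mid \obs_t, z_{t:\infty}) = \Pr(f(\obs_t,\xi_t)\in A \mid \obs_t) = \Pr(a_t \in A \mid \obs_t)$, which is exactly the claimed conditional independence. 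Identifying $\Pr(a_t \mid \obs_t)$ with $\pi_\theta(a_t \mid \obs_t)$ and multiplying through by $\Pr(z_{t:\infty}\mid\obs_t)$ yields the factorization in the statement.

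I expect the main obstacle to be handling the conditioning on the \emph{future} inputs $z_{t:\infty}$, which are not yet realized when $a_t$ is taken. The tempting alternative is a d-separation argument on the graphical model, but then one must rule out that conditioning on $\obs_t$ opens a ``collider'' path coupling $a_t$ to later inputs. This is precisely where exogeneity is indispensable: because no $z_j$ ever has a state or action as a parent, the action $a_t$ can influence the trajectory only through $s_{t+1}, s_{t+2}, \dots$, none of which feed back into the input chain, so $\xi_t$ remains independent of $\bm{z}$ in its entirety. I would therefore favour the structural noise-variable argument above over a purely graphical one, as it sidesteps the infinite-horizon conditioning subtleties and makes the role of exogeneity transparent. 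As a sanity check I would also verify the identity by direct marginalization of the finite-horizon joint and pass to the limit, confirming that no action-dependent factor survives in $\Pr(z_{t:\infty}\mid\obs_t)$.
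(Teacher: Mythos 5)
Your proof is correct, but it takes a genuinely different route from the paper's. The paper argues directly on distributions via the chain rule: it factorizes $\Pr(z_{0:\infty}, \obs_t, a_t) = \Pr(z_{0:t},\obs_t,a_t)\,\Pr(z_{t+1:\infty}\mid z_{0:t},\obs_t,a_t)$, invokes exogeneity to replace the second factor by $\Pr(z_{t+1:\infty}\mid z_{0:t})$, invokes the policy structure to write $\Pr(a_t\mid z_{0:t},\obs_t)=\pi_\theta(a_t\mid\obs_t)$, reassembles the product into $\Pr(z_{0:\infty},\obs_t)\,\pi_\theta(a_t\mid\obs_t)$, and finally marginalizes over $z_{0:t-1}$. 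You instead use a structural (reparameterization) argument: $a_t=f(\obs_t,\xi_t)$ with $\xi_t$ fresh noise independent of everything realized so far and of $\bm{z}$, plus the standard fact that a function of $(\obs_t,\xi_t)$ with $\xi_t\perp(\obs_t,z_{t:\infty})$ is conditionally independent of $z_{t:\infty}$ given $\obs_t$. Both proofs hinge on the same two structural facts\,---\,the input kernel $\mathcal{P}_z$ never conditions on states or actions, and the policy conditions only on $\obs_t$\,---\,but the machinery differs, and the trade-off is real: the paper's computation is shorter and stays entirely within the notation of Definition~\ref{def:input-driven-process}, though it asserts its two conditional-independence identities ``from the definition'' without further justification; your argument makes precisely those identities transparent (in particular, why conditioning on the not-yet-realized $z_{t+1:\infty}$ is harmless, and where the proof would break if some $z_j$ had a state or action as a parent) and sidesteps infinite-horizon conditioning issues, at the cost of setting up the noise-variable representation. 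One small gloss to tighten: $(\obs_t, z_{t:\infty})$ is a function not only of $\bm{z}$ and the past action noises $\xi_{0:t-1}$ but also of the state-transition randomness in $\mathcal{P}_s$; that randomness must be (and is) included among the variables jointly independent of $\xi_t$, so your conclusion stands.
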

\vspace{-0.2cm}
\emph{Proof.} See Appendix~\ref{s:pf-markov-chain}.

\begin{lemma} \label{lem: imdp pg}
For an input-driven MDP, the policy gradient theorem can be rewritten as
\begin{equation}
\nabla_\theta \eta\left(\pi_\theta\right) = \mathbb{E}_{\substack{(\obs, \bm{z}) \sim \rho_\pi \\ a\sim\pi_\theta}}\Big[\nabla_\theta \log\pi_\theta(a|\obs)Q(\obs,a,\bm{z})\Big], \label{eq:input-pg}
\end{equation}
where $\rho_{\pi}(\obs, \bm{z}) = \sum_{t=0}^\infty\left[\gamma^t\Pr(\obs_t=\obs, z_{t:\infty}=\bm{z})\right]$ denotes the discounted visitation frequency of the observation $\obs$ and input sequence $\bm{z}$, and $Q(\obs,a,\bm{z}) = \mathbb{E}\left[\sum_{l=0}^\infty\gamma^l r_{t+l}\given[\big]\obs_t = \obs, a_t = a, z_{t:\infty}=\bm{z}\right]$.
\end{lemma}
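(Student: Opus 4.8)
The plan is to mirror the standard REINFORCE-style proof of the Policy Gradient Theorem, but to carry the exogenous input realization $\bm{z}$ alongside the trajectory throughout. I would begin by writing the objective as an expectation over the joint law of the full trajectory and the input sequence, $\tau = (\obs_0, a_0, \obs_1, a_1, \ldots)$ together with $\bm{z}$, and expanding $\eta(\pi_\theta) = \int \Pr_\theta(\tau)\left(\sum_{t=0}^\infty \gamma^t r_t\right) d\tau$. The crucial structural input here is Definition~\ref{def:input-driven-process}: the joint density factorizes into the initial distributions $\rho_0^s, \rho_0^z$, the input kernels $\mathcal{P}_z(z_{t+1}\given z_{0:t})$, the state kernels $\mathcal{P}_s(s_{t+1}\given s_t, a_t, z_t)$, and the policy factors $\pi_\theta(a_t\given \obs_t)$ --- and only the last of these depends on $\theta$, since the input process is exogenous and independent of the states and actions.

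Applying the log-derivative identity $\nabla_\theta \Pr_\theta(\tau) = \Pr_\theta(\tau)\,\nabla_\theta \log\Pr_\theta(\tau)$ and noting that every non-policy factor vanishes under $\nabla_\theta \log$, I obtain $\nabla_\theta\eta(\pi_\theta) = \mathbb{E}\big[\big(\sum_t \nabla_\theta\log\pi_\theta(a_t\given\obs_t)\big)\big(\sum_{t'} \gamma^{t'} r_{t'}\big)\big]$. Next I would invoke the usual causality argument: for $t' < t$ the reward $r_{t'}$ is a function only of quantities realized by time $t'$, so conditioning on $\obs_t$ and using $\mathbb{E}_{a_t\sim\pi_\theta}[\nabla_\theta\log\pi_\theta(a_t\given\obs_t)\given\obs_t]=0$ annihilates every cross term with $t' < t$. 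This collapses the double sum to $\sum_t \mathbb{E}\big[\nabla_\theta\log\pi_\theta(a_t\given\obs_t)\,\gamma^t\sum_{l\ge 0}\gamma^l r_{t+l}\big]$.

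Finally I would apply the tower property, conditioning each term on $(\obs_t, a_t, z_{t:\infty})$. Because $\nabla_\theta\log\pi_\theta(a_t\given\obs_t)$ is measurable with respect to the conditioning variables, the inner expectation of the discounted future reward is exactly $Q(\obs_t, a_t, z_{t:\infty})$ as defined in the statement; pulling the $\gamma^t$ weight into the sum over $t$ and recognizing $\sum_t \gamma^t \Pr(\obs_t = \obs,\, z_{t:\infty} = \bm{z})$ as $\rho_\pi(\obs, \bm{z})$ yields the claimed identity. I expect the main obstacle to be rigorously justifying this conditioning step: one must check that conditioning on the future input $z_{t:\infty}$ (rather than the whole input path) together with $(\obs_t, a_t)$ suffices to pin down the distribution of future rewards, which rests on the Markov/exogeneity structure of the model and on the independence of $z_{t:\infty}$ from $a_t$ given $\obs_t$ established in Lemma~\ref{lem: markov chain}. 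A secondary technical point is controlling the interchange of $\nabla_\theta$ with the infinite sum and the integral, and handling the discount-frequency subtlety noted after Equation~\eqref{eq:pg}; these are routine under the standard boundedness assumptions.
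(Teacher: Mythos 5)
Your proposal follows essentially the same route as the paper's proof: both reduce to the causality-expanded form $\mathbb{E}\left[\sum_t \nabla_\theta\log\pi_\theta(a_t|\obs_t)\sum_{t'\geq t}\gamma^{t'}r_{t'}\right]$ (the paper simply cites the standard Policy Gradient Theorem for this step, whereas you re-derive it via the score-function trick and exogeneity of the input kernels), then condition each term on $(\obs_t, a_t, z_{t:\infty})$, invoke Lemma~\ref{lem: markov chain} to factor the joint law as $\Pr(\obs_t=\obs, z_{t:\infty}=\bm{z})\,\pi_\theta(a|\obs)$, identify the inner conditional expectation as $\gamma^t Q(\obs,a,\bm{z})$, and re-sum over $t$ to recognize $\rho_\pi(\obs,\bm{z})$. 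The argument is correct and matches the paper's proof in all essential steps.
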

\vspace{-0.1cm}
\emph{Proof.} See Appendix~\ref{s:pf-input-dependent-pg}.

Equation~\eqref{eq:input-pg} generalizes the standard Policy Gradient Theorem in Equation~\eqref{eq:pg}. $\rho_\pi(\obs, \bm{z})$ can be thought of as a joint distribution over observations and input sequences. $Q(\obs, a, \bm{z})$ is a ``state-action-input'' value function, i.e., the expected return when taking action $a$ after observing $\obs$, with input sequence $\bm{z}$ from that step onwards. The key ingredient in the proof of Lemma~\ref{lem: imdp pg} is the conditional independence of the input process $z_{t:\infty}$ and the action $a_t$ given the
observation $\obs_t$ (Lemma~\ref{lem: markov chain}).

\begin{thm} \label{thm: bias free}
An input-dependent baseline does not bias the policy gradient.
\end{thm}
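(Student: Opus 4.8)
The plan is to show that replacing $Q(\obs,a,\bm{z})$ with $Q(\obs,a,\bm{z}) - b(\obs,\bm{z})$ inside the policy gradient of Lemma~\ref{lem: imdp pg} leaves the gradient unchanged. By linearity of expectation, Equation~\eqref{eq:input-pg} with the baseline subtracted equals $\nabla_\theta \eta(\pi_\theta)$ minus the correction term
\[
\Delta \;=\; \mathbb{E}_{\substack{(\obs, \bm{z}) \sim \rho_\pi \\ a\sim\pi_\theta}}\big[\nabla_\theta \log\pi_\theta(a|\obs)\, b(\obs,\bm{z})\big].
\]
So it suffices to prove $\Delta = 0$, i.e.\ that the baseline contributes nothing in expectation.

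Next I would condition on the observation $\obs$ to decouple the action from the input sequence. Lemma~\ref{lem: markov chain} tells us that $z_{t:\infty} - \obs_t - a_t$ is a Markov chain, so the joint law of $(\obs, \bm{z}, a)$ factors as $\rho_\pi(\obs,\bm{z})\,\pi_\theta(a|\obs)$; in particular, given $\obs$ the action is drawn from $\pi_\theta(\cdot|\obs)$ independently of $\bm{z}$. I would then reorder the summation so that, for each fixed $(\obs,\bm{z})$, the factor $b(\obs,\bm{z})$\,---\,which does not depend on $a$\,---\,is pulled out of the sum over actions:
\[
\Delta \;=\; \sum_{\obs,\bm{z}} \rho_\pi(\obs,\bm{z})\, b(\obs,\bm{z}) \sum_a \pi_\theta(a|\obs)\, \nabla_\theta \log\pi_\theta(a|\obs).
\]
The inner sum is the standard score-function expectation $\sum_a \nabla_\theta \pi_\theta(a|\obs) = \nabla_\theta \sum_a \pi_\theta(a|\obs) = \nabla_\theta 1 = 0$, which vanishes for every $(\obs,\bm{z})$, giving $\Delta = 0$.

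The main obstacle\,---\,and the only place where the input-driven structure is essential\,---\,is justifying the factorization in the second step. If the future input $\bm{z}$ could depend on $a$ given $\obs$, then $b(\obs,\bm{z})$ would implicitly be a function of the action and could not be pulled outside the action sum, so $\Delta$ would in general be nonzero and the baseline would inject bias. Lemma~\ref{lem: markov chain} is exactly what forbids this, and it is where the hypothesis that the input process is independent of the states and actions is used. Once the conditional independence is available, the remainder is the classical bias-free baseline argument. Since Lemma~\ref{lem: markov chain} holds for both Case~1 and Case~2 of Figure~\ref{fig:graphical-model}, the conclusion does too.
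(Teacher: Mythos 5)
Your proposal is correct and follows essentially the same route as the paper: reduce the claim to showing the baseline term $\mathbb{E}_{(\obs,\bm{z})\sim\rho_\pi,\,a\sim\pi_\theta}\left[\nabla_\theta\log\pi_\theta(a|\obs)\,b(\obs,\bm{z})\right]$ vanishes, use the factorization $\rho_\pi(\obs,\bm{z})\,\pi_\theta(a|\obs)$ of the joint law to pull $b(\obs,\bm{z})$ out of the action sum, and finish with the score-function identity $\sum_a \pi_\theta(a|\obs)\nabla_\theta\log\pi_\theta(a|\obs)=\nabla_\theta\sum_a\pi_\theta(a|\obs)=0$. The only cosmetic difference is that you invoke Lemma~\ref{lem: markov chain} explicitly to justify the factorization, whereas the paper's proof has it baked into the sampling notation of Lemma~\ref{lem: imdp pg} (whose own proof rests on Lemma~\ref{lem: markov chain}), so the logical content is identical.
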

\vspace{-0.3cm}
\begin{proof}
%
Using Lemma~\ref{lem: imdp pg}, we need to show: $\mathbb{E}_{(\obs, \bm{z}) \sim \rho_\pi, a \sim \pi_\theta}\left[\nabla_\theta\log\pi_\theta(a|\obs)b(\obs, \bm{z})\right] = 0$. We have:
\begin{align}
\mathbb{E}_{\substack{(\obs, \bm{z}) \sim \rho_\pi \\ a\sim\pi_\theta}}\left[\nabla_\theta\log\pi_\theta(a|\obs)b(\obs, \bm{z})\right] &= \sum_{\obs} \sum_{\bm{z}} \sum_a \rho_\pi(\obs, \bm{z}) \pi_\theta(a | \obs) \nabla_\theta\log\pi_\theta(a|\obs)b(\obs, \bm{z}) \notag \\
&= \sum_\obs \sum_{\bm{z}} \rho_\pi(\obs, \bm{z}) b(\obs, \bm{z}) \sum_a \pi_\theta(a | \obs) \nabla_\theta\log\pi_\theta(a|\obs). \label{eq: bias free}
\vspace{-0.3cm}
\end{align}
Since $ \sum_a \pi_\theta(a | \obs) \nabla_\theta\log\pi_\theta(a|\obs) = \sum_a \nabla_\theta \pi_\theta (a | \obs) = \nabla_\theta \sum_a \pi_\theta (a | \obs) = 0$, the theorem follows.
\end{proof}

Input-dependent baselines are also bias-free for policy optimization methods such as TRPO~\cite{trpo}, as we show in Appendix~\ref{s:trpo-thm}. Next, we derive the optimal input-dependent baseline for variance reduction. 
As the gradient estimates are vectors, we use the trace of the covariance matrix as the minimization objective~\cite{greensmith}.

\begin{thm}
The input-dependent baseline that minimizes variance in policy gradient is given by
\begin{equation}
b^*(\obs, \bm{z}) = \frac{\mathbb{E}_{a \sim \pi_\theta}\left[\nabla_\theta\log\pi_\theta(a|\obs)^T\nabla_\theta\log\pi_\theta(a|\obs) Q(\obs,a,\bm{z})\right]}{\mathbb{E}_{a \sim \pi_\theta}\left[\nabla_\theta\log\pi_\theta(a|\obs)^T\nabla_\theta\log\pi_\theta(a|\obs)\right]}.
\label{eq: opt_idb}
\end{equation}
\end{thm}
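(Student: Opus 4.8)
The plan is to use the bias-free property from Theorem~\ref{thm: bias free} to decouple the baseline from the mean of the gradient estimator, reducing the problem to minimizing the second moment, and then to minimize that second moment pointwise in $(\obs, \bm{z})$. First I would write the estimator as the random vector $g = \nabla_\theta\log\pi_\theta(a|\obs)\left(Q(\obs, a, \bm{z}) - b(\obs, \bm{z})\right)$, with $(\obs, \bm{z})\sim\rho_\pi$ and $a\sim\pi_\theta(\cdot|\obs)$. The objective is $\mathrm{tr}\left(\mathrm{Cov}(g)\right) = \mathbb{E}\left[g^T g\right] - \mathbb{E}[g]^T\mathbb{E}[g]$. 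By Theorem~\ref{thm: bias free}, inserting the baseline leaves $\mathbb{E}[g]$ unchanged, so the second term is constant in $b$; hence minimizing variance is equivalent to minimizing the second moment $\mathbb{E}\left[g^T g\right]$.

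Next I would expand this second moment. Abbreviating $w(\obs, a) = \nabla_\theta\log\pi_\theta(a|\obs)^T\nabla_\theta\log\pi_\theta(a|\obs)$, we have $g^T g = w(\obs, a)\left(Q(\obs, a, \bm{z}) - b(\obs, \bm{z})\right)^2$, so that
\[
\mathbb{E}\left[g^T g\right] = \sum_{\obs, \bm{z}}\rho_\pi(\obs, \bm{z})\,\mathbb{E}_{a\sim\pi_\theta}\!\left[w(\obs, a)\left(Q(\obs, a, \bm{z}) - b(\obs, \bm{z})\right)^2\right].
\]
Because $\rho_\pi(\obs, \bm{z})\ge 0$ and each scalar value $b(\obs, \bm{z})$ enters only its own summand, I can minimize each inner conditional expectation independently. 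Fixing $(\obs, \bm{z})$ and treating $b = b(\obs, \bm{z})$ as a scalar, I would differentiate the convex quadratic $\mathbb{E}_a\!\left[w(\obs, a)(Q - b)^2\right]$ and set the derivative to zero, obtaining $\mathbb{E}_a\!\left[w(\obs, a)(Q(\obs, a, \bm{z}) - b)\right] = 0$; solving for $b$ gives Equation~\eqref{eq: opt_idb}. The second derivative equals $2\,\mathbb{E}_a\!\left[w(\obs, a)\right]\ge 0$, confirming a minimizer.

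The main obstacle is more a subtlety than a genuine difficulty: justifying the pointwise minimization. The key is that the baseline is a function of $(\obs, \bm{z})$ alone, hence constant with respect to the inner expectation over $a$, which lets me factor it out and optimize each summand separately without coupling across distinct $(\obs, \bm{z})$. The bias-free result of Theorem~\ref{thm: bias free} is precisely what makes this separation clean, since otherwise the mean term $\mathbb{E}[g]^T\mathbb{E}[g]$ would itself depend on $b$ and the minimization would not decouple. I would close by noting that $b^*$ is a $w$-weighted conditional expectation of $Q$ over actions, generalizing the classical optimal state-dependent baseline of \cite{greensmith} to the input-dependent setting.
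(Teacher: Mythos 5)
Your proposal is correct and follows essentially the same route as the paper's proof: both use the bias-free result (Theorem~\ref{thm: bias free}) to show the mean term of the variance decomposition is independent of $b$, reduce to minimizing the second moment $\mathbb{E}\left[G(\obs,a)\left(Q(\obs,a,\bm{z})-b(\obs,\bm{z})\right)^2\right]$ (your $w$ is the paper's $G$), and solve the resulting pointwise quadratic in $b(\obs,\bm{z})$ via its first-order condition. Your explicit justification of the pointwise decoupling and the second-order check are minor elaborations of steps the paper treats implicitly.
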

\vspace{-0.2cm}
\emph{Proof.} See Appendix~\ref{s:pf-opt-input-baseline}.


%

%
Operationally, for observation $\obs_t$ at each step $t$, the input-dependent baseline takes the form
$b(\obs_t, z_{t:\infty})$.
%
In practice, we use a simpler alternative to Equation~\eqref{eq: opt_idb}:
$b(\obs_t, z_{t:\infty}) = \mathbb{E}_{a_t\sim\pi_\theta}\left[Q(\obs_t,a_t, z_{t:\infty})\right]$.
%
This can be thought of as a value function $V(\obs_t, z_{t:\infty})$ that provides the expected return given observation $\obs_t$ and input sequence $z_{t:\infty}$ from that step onwards.
%
We discuss how to estimate input-dependent baselines efficiently in \S\ref{s:algo}.



\textbf{Remark.}\,
Input-dependent baselines are generally applicable to reducing variance for policy gradient methods in input-driven environments.
In this paper, we apply input-dependent baselines to A2C (\S\ref{s:discrete}), TRPO (\S\ref{s:continuous}) and PPO (Appendix \ref{s:ppo}).
Our technique is complementary and orthogonal to adversarial RL (e.g., RARL~\cite{rarl}) and meta-policy adaptation (e.g., MB-MPO~\cite{mbmpo}) for environments with external disturbances.
Adversarial RL improves policy robustness by co-training an ``adversary'' to generate a worst-case disturbance process.
Meta-policy optimization aims for fast policy adaptation to handle model discrepancy between training and testing.
By contrast, input-dependent baselines improve policy optimization \emph{itself} in the presence of stochastic input processes.
Our work primarily focuses on learning a single policy in input-driven environments, without policy adaptation.
However, input-dependent baselines can be used as a general method to improve the policy optimization step in adversarial RL and meta-policy adaptation methods.
For example, in Appendix~\ref{s:rarl}, we empirically show that if an adversary generates high-variance noise, RARL with a standard state-based baseline cannot train good controllers, but the input-dependent baseline helps improve the policy's performance.
Similarly, input-dependent baselines can improve meta-policy optimization in environments with stochastic disturbances, as we show in Appendix~\ref{s:mpo}.

\section{Learning Input-Dependent Baselines Efficiently}
\label{s:algo}

Input-dependent baselines are functions of the sequence of input values.
A natural approach to train such baselines is to use models that
operate on sequences (e.g., LSTMs~\cite{lstm}).
%
However, learning a sequential mapping in a high-dimensional space can be
expensive~\cite{nmt}.
We considered an LSTM approach, but ruled it out when initial experiments
showed that it fails to provide significant policy improvement
over the standard baseline in our environments~(Appendix~\ref{s:lstm}).

Fortunately, we can learn the baseline much more efficiently in applications where we can repeat the same input sequence multiple times during training. Input-repeatability is feasible in many applications: it is straightforward when using simulators for training, and also feasible when training a real system with previously-collected input traces outside simulation. For example, training a robot in the presence of exogenous forces might apply a set of time-series traces of these forces repeatedly to the physical robot.
We now present two approaches that exploit input-repeatability to learn input-dependent baselines efficiently.

%

\noindent
\textbf{Multi-value-network approach.}
%
%
%
A straightforward way to learn $b(\obs_t, z_{t:\infty})$ for different input instantiations
$\bm{z}$ is to train one value network to each particular instantiation of the input process.
%
%
Specifically, in the training process, we first generate $N$ input sequences
$\{\bm{z}_1, \bm{z}_2, \cdots, \bm{z}_N\}$ and restrict training \emph{only} to those $N$ sequences.
To learn a separate baseline function for each input sequence, we use $N$ value networks with
independent parameters $\theta_{V_1}, \theta_{V_2}, \cdots, \theta_{V_N}$,
and single policy network with parameter $\theta$.
%
%
During training, we randomly sample an input sequence $\bm{z}_i$, execute a rollout based on $\bm{z}_i$ with the current policy $\pi_\theta$, and use
the (state, action, reward) data to train the value network parameter $\theta_{V_i}$ and the policy network parameter $\theta$
(details in Appendix~\ref{s:multivalue}).
%

%
\noindent
\textbf{Meta-learning approach.}
The multi-value-network approach does not scale if the task requires training over a large number of input instantiations to generalize.
The number of inputs needed is environment-specific, and can depend on a variety of factors, such as the time horizon of the problem, the distribution of the input process, the relative magnitude
of the variance due to the input process compared to other sources of randomness (e.g., actions).
%
Ideally, we would like an approach that enables learning across many different input sequences.
%
We present a method based on {\em meta-learning} to train with an unbounded number of input sequences.
%
%
The idea is to use \emph{all} (potentially infinitely many) input
sequences to learn a ``meta value network'' model. Then, for each specific input sequence,
we first customize the meta value network  using a few example rollouts with that input sequence.
%
%
We then compute the actual baseline values for training the policy network parameters, using the customized value network for the specific input sequence.
Our implementation uses Model-Agnostic Meta-Learning (MAML)~\cite{maml}.
%
\begin{algorithm}
\caption{Training a meta input-dependent baseline for policy-based methods.}
\label{alg:meta}
\begin{algorithmic}[1]
\REQUIRE $\alpha$, $\beta$: meta value network step size hyperparameters
\STATE Initialize policy network parameters $\theta$ and meta-value-network parameters $\theta_V$
\WHILE{not done}
\STATE Generate a new input sequence $\bm{z}$
\STATE Sample $k$ rollouts $\mathcal{T}_1, \mathcal{T}_2, ..., \mathcal{T}_{k}$ using policy $\pi_\theta$ and input sequence $\bm{z}$
\STATE Adapt $\theta_V$ with the first $k/2$ rollouts: $\theta_V^1 = \theta_V - \alpha\nabla_{\theta_V}\mathcal{L}_{\mathcal{T}_{1:k/2}}\left[V_{\theta_V}\right]$
\STATE Estimate baseline value $V_{\theta_V^1}(\obs_t)$ for $s_t \sim \mathcal{T}_{k/2:k}$ using adapted $\theta_V^1$
\STATE Adapt $\theta_V$ with the second $k/2$ rollouts: $\theta_V^2 = \theta_V - \alpha\nabla_{\theta_V}\mathcal{L}_{\mathcal{T}_{k/2:k}}\left[V_{\theta_V}\right]$
\STATE Estimate baseline value $V_{\theta_V^2}(\obs_t)$ for $s_t \sim \mathcal{T}_{1:k/2}$ using adapted $\theta_V^2$
\STATE Update policy with Equation~\eqref{eq:input-pg} using the values from line (6) and (8) as baseline
\STATE Update meta value network: $\theta_V \leftarrow \theta_V - \beta\nabla_{\theta_V}\mathcal{L}_{k/2:k}\left[V_{\theta_V^1}\right] - \beta\nabla_{\theta_V}\mathcal{L}_{1:k/2}\left[V_{\theta_V^2}\right]$
\ENDWHILE
\end{algorithmic}
\end{algorithm}

The pseudocode in Algorithm~\ref{alg:meta} depicts the training algorithm. We follow the notation of MAML, denoting the loss in the value
function $V_{\theta_V}(\cdot)$ on a rollout $\mathcal{T}$ as $\mathcal{L}_\mathcal{T}\left[V_{\theta_V}\right] = \sum_{\obs_t, r_t \sim \mathcal{T}}\| V_{\theta_V}(\obs_t)
- \sum_{t'=t}^{T}\gamma^{t'-t}r_t \|^2$.
We perform rollouts $k$ times with the same input sequence $\bm{z}$ (lines 3 and 4); we use the first $k/2$ rollouts to customize the meta value network for this instantiation of $\bm{z}$ (line 5), and then apply the customized value network on the states of the other $k/2$ rollouts to compute the baseline for those rollouts (line 6); similarly, we swap the two groups of rollouts and repeat the same process (lines 7 and 8). We use different rollouts to adapt the meta value network and compute the baseline to avoid introducing extra bias to the baseline. Finally, we use the baseline values computed for each rollout to update the policy network parameters (line 9), and we apply the MAML~\cite{maml} gradient step to update the meta value network model (line 10).

\section{Experiments}
\label{s:eval}

Our experiments demonstrate that input-dependent baselines provide consistent performance gains across multiple continuous-action MuJoCo simulated robotic locomotions and discrete-action environments in queuing systems and network control. We conduct experiments for both policy gradient methods and policy optimization methods (see Appendix~\ref{s:exp-details} for details). The videos for our experiments are available at \url{https://sites.google.com/view/input-dependent-baseline/}.


\subsection{Simulated Robotic Locomotion}
\label{s:continuous}
\begin{figure}
  \centering
  \vspace{-0.8cm}
  \includegraphics[width=1.0\columnwidth]{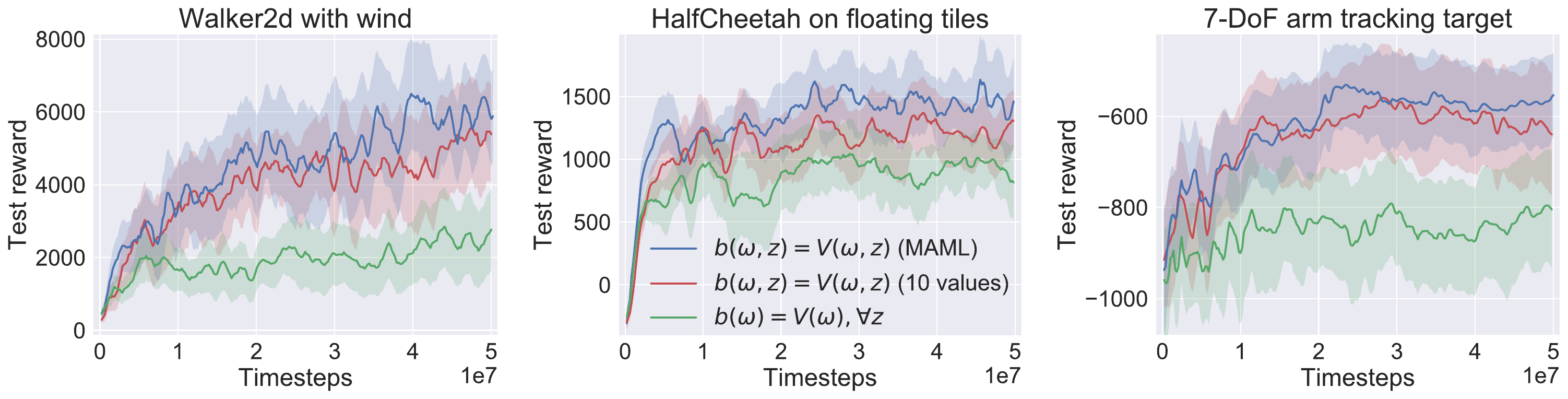}
  \vspace{-0.65cm}
  \caption{In continuous-action MuJoCo environments, TRPO~\cite{trpo} with input-dependent baselines
  achieve 25\%--3$\times$ better testing reward than with a standard state-dependent baseline.
  Learning curves are on 100 testing episodes with unseen input sequences; shaded area spans one standard deviation.}
  \label{fig:trpo}
  \vspace{-0.3cm}
\end{figure}
We use the MuJoCo physics engine~\cite{mujoco} in OpenAI Gym~\cite{openaigym} to evaluate input-dependent baselines for robotic control tasks with external disturbance. We extend the standard Walker2d, HalfCheetah and 7-DoF robotic arm environments, adding a different external input to each (Figure~\ref{fig:environments}).

\textbf{Walker2d with random wind (Figure~\ref{fig:environments}{\color{darkred}c}).} We train a 2D walker with varying wind, which randomly drags the walker backward or forward with different force at each step. The wind vector changes randomly, i.e., the wind forms a random input process. We add a force sensor to the state to enable the agent to quickly adapt. The goal is for the walker to walk forward while keeping balance.

\textbf{HalfCheetah on floating tiles with random buoyancy (Figure~\ref{fig:environments}{\color{darkred}d}).} A half-cheetah runs over a series of tiles floating on water~\cite{meta-adapt}. Each tile has different damping and friction properties, which moves the half-cheetah up and down and changes its dynamics. This random buoyancy is the external input process; the cheetah needs to learn running forward over varying tiles.

\textbf{7-DoF arm tracking moving target (Figure~\ref{fig:environments}{\color{darkred}e}).} We train a simulated robot arm to track a randomly moving target (a red ball). The robotic arm has seven degrees of freedom and the target is doing a random walk, which forms the external input process. The reward is the negative squared distance between the robot hand (blue square) and the target.

The Walker2d and 7-DoF arm environments correspond to the fully observable MDP case in Figure~\ref{fig:graphical-model}, i.e. the agent observes the input $z_t$ at time $t$. The HalfCheetah environment is a POMDP, as the agent does not observe the buoyancy of the tiles. In Appendix~\ref{s:pomdp}, we show results for the POMDP version of the Walker2d environment.

\textbf{Results.}
We build 10-value networks and a meta-baseline using MAML, both on top of the OpenAI's TRPO implementation~\cite{openai-baselines}.
Figure~\ref{fig:trpo} shows the performance comparison among different baselines with 100
unseen testing input sequences at each training checkpoint.
These learning curves show that TRPO with a state-dependent baseline performs worst in all environments. With the input-dependent baseline, by contrast, performance in unseen testing environments improves by up to 3$\times$, as the agent learns a policy robust against disturbances. For example, it learns to lean into headwind and quickly place its leg forward to counter the headwind; it learns to apply different force on tiles with different buoyancy to avoid falling over; and it learns to co-adjust multiple joints to keep track of the moving object. The meta-baseline eventually outperforms 10-value networks as it effectively learns from a large number of input processes and hence generalizes better.
%

The input-dependent baseline technique applies generally on top of policy optimization methods.
In Appendix~\ref{s:ppo}, we show a similar comparison with PPO~\cite{ppo}. Also, in Appendix~\ref{s:rarl}
we show that adversarial RL (e.g., RARL \cite{rarl}) alone is not adequate to solve the high variance problem,
and the input-dependent baseline helps improve the policy performance~(Figure~\ref{fig:rarl}).


\subsection{Discrete-Action Environments}
\label{s:discrete}
Our discrete-action environments arise from widely-studied problems in computer systems research: load balancing and bitrate adaptation.\footnote{We considered Atari games often used as benchmark discrete-action RL environments~\cite{atari}. However, Atari games lack an exogenous input process: a random seed perturbs the games' initial state, but it does not affect the environmental changes (e.g., in ``Seaquest'', the ships always come in a fixed pattern).}
 As these problems often lack closed-form optimal solutions~\cite{graphene, mpc}, hand-tuned heuristics abound. Recent work suggests that model-free reinforcement learning can achieve better performance than such human-engineered heuristics~\cite{deeprm, dccooling, pensieve, tf-device-placement}. We consider a load balancing environment (similar to the example in~\S\ref{s:example}) and a bitrate adaptation environment in video streaming~\cite{mpc}. The detailed setup of these environments is in Appendix~\ref{s:env-setup}.

\textbf{Results.} We extend OpenAI's A2C implementation~\cite{openai-baselines} for our baselines.
The learning curves in Figure~\ref{fig:a2c} illustrate that directly applying A2C with a standard value network as the baseline results in unstable test reward and underperforms the traditional heuristic in both environments. Our input-dependent baselines reduce the variance and improve test reward by 25--33\%, outperforming the heuristic. The meta-baseline performs the best in all environments.

\begin{figure}
  \centering
  \vspace{-0.6cm}
  \includegraphics[width=0.9\columnwidth]{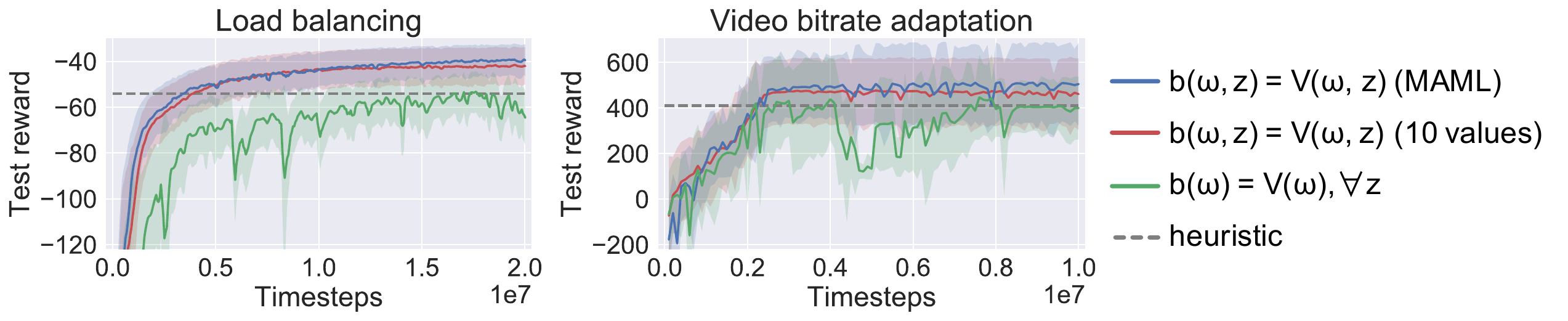}
  \vspace{-0.3cm}
  \caption{In environments with discrete action spaces, A2C~\cite{a3c} with input-dependent baselines
    outperforms the best heuristic and achieves 25--33\% better
    testing reward than vanilla A2C~\cite{a3c}. Learning curves are on 100 test episodes with unseen input sequences; shaded area spans one standard deviation.}
  \vspace{-0.2cm}
  \label{fig:a2c}
\end{figure}

\section{Related Work}
\label{s:related}

Policy gradient methods compute unbiased gradient estimates, but can experience a
large variance~\cite{rlbook, tao}.
%
%
Reducing variance for policy-based methods using a baseline has been shown to be
effective~\cite{reinforce, rlbook, tao, greensmith, a3c}.
Much of this work focuses on variance reduction in a general MDP setting, rather than variance
reduction for MDPs with specific stochastic structures.
\citet{cathybaseline}'s techniques for MDPs with multi-variate independent actions are closest to our
work. Their state-action-dependent baseline improves training efficiency and model performance
on high-dimensional control tasks by explicitly factoring out, for each action, the effect due to other actions.
By contrast, our work exploits the structure of state transitions instead of stochastic policy.


%
Recent work has also investigated the bias-variance tradeoff in policy gradient methods.
\citet{gae} replace the Monte Carlo return with a $\lambda$-weighted return
estimation (similar to TD($\lambda$) with value function
bootstrap~\cite{td-gammon}), improving performance in high-dimensional control tasks.
Other recent approaches use more general control variates to construct variants of
policy gradient algorithms.
%
%
~\citet{mirage} compare the recent work, both analytically on a linear-quadratic-Gaussian task and
empirically on complex robotic control tasks.
%
Analysis of control variates
for policy gradient methods is a well-studied topic, and extending such
analyses (e.g., \citet{greensmith}) to the input-driven MDP setting could be interesting future work.

%

%
In other contexts, prior work has proposed new RL training methodologies for environments
with disturbances. 
\citet{mbmpo} adapts the policy to different pattern of disturbance by training the RL agent
using meta-learning.
RARL~\cite{rarl} improves policy robustness by co-training an adversary to generate a worst-case noise process.
Our work is orthogonal and complementary to these work, as
we seek to improve policy optimization itself in the presence of inputs like disturbances.
\section{Conclusion}
\label{s:conclusion}

We introduced input-driven Markov Decision Processes in which stochastic input processes influence state dynamics and rewards. In this setting, we demonstrated that an input-dependent baseline can significantly reduce variance for policy gradient methods, improving training stability and the quality of learned policies. Our work provides an important ingredient for using RL successfully in a variety of domains, including queuing networks and computer systems, where an input workload is a fundamental aspect of the system, as well as domains where the input process is more implicit, like robotics control with disturbances or random obstacles.

We showed that meta-learning provides an efficient way to learn input-dependent baselines for applications where input sequences can be repeated during training. Investigating efficient architectures for input-dependent baselines for cases where the input process cannot be repeated in training is an interesting direction for future work.

\textbf{Acknowledgements.} We thank
Ignasi Clavera for sharing the HalfCheetah environment,
Jonas Rothfuss for the comments on meta-policy optimization and
the anonymous ICLR reviewers for their feedback.
This work was funded in part by NSF grants CNS-1751009, CNS-1617702, a Google Faculty Research Award, an AWS Machine Learning Research Award, a Cisco Research Center Award, an Alfred P. Sloan Research Fellowship and the sponsors of MIT Data Systems and AI Lab.


\bibliography{paper}
\bibliographystyle{iclr2019_conference}

\newpage
\appendix

\section{Illustration of Variance Reduction in 1D Grid World}
\label{s:grid-world}
Consider a walker in a 1D grid world, where the state $s_t \in \mathbb{Z}$ at time $t$ denotes the position of the walker, and action $a_t \in \{-1, +1\}$ denotes the intent to either move forward or backward.
Additionally let $z_t \in \{-1, +1\}$ be a uniform i.i.d. ``exogenous input'' that perturbs the position of the walker.
For an action $a_t$ and input $z_t$, the state of the walker in the next step is given by $s_{t+1} = s_t + a_t + z_t$.
The objective of the game is to move the walker forward; hence, the reward is $r_t = a_t + z_t$ at each time step.
$\gamma \in [0, 1]$ is a discount factor.

While the optimal policy for this game is clear ($a_t = +1$ for all $t$), consider learning such a policy  using policy gradient.
For simplicity, let the policy be parametrized as $\pi_\theta(a_t = +1|s_t) = e^\theta / (1+e^\theta)$, with $\theta$ initialized to $0$ at the start of training.
In the following, we evaluate the variance of the policy gradient estimate at the start of training under (i) the standard value function baseline, and (ii) a baseline that is the expected cumulative reward conditioned on all future $z_t$ inputs.

\textbf{Variance under standard baseline.}
The value function in this case is identically $0$ at all states. This is because $\mathbb{E}[\sum_{t=0}^\infty \gamma^t r_t] = \mathbb{E}[\sum_{t=0}^\infty \gamma^t (a_t + z_t)] = 0$ since both actions $a_t$ and inputs $z_t$ are i.i.d. with mean $0$.
Also note that $\nabla_\theta \log \pi_\theta(a_t = +1) = 1/2 $ and $\nabla_\theta \log \pi_\theta(a_t = -1) = -1/2 $; hence  $\nabla_\theta \log \pi_\theta(a_t) = a_t/2 $.
Therefore the variance of the policy gradient estimate can be written as
\begin{align}
V_1 = \mathrm{Var} \left[ \sum_{t=0}^\infty  \frac{a_t}{2} \sum_{t'=t}^\infty \gamma^{t'} r_{t'}  \right]
= \mathrm{Var} \left[ \sum_{t=0}^\infty  \frac{a_t}{2} \sum_{t'=t}^\infty \gamma^{t'} (a_{t'} + z_{t'})  \right].  \label{eq: stand variance}
\end{align}

\textbf{Variance under input-dependent baseline.}
Now, consider an alternative ``input-dependent'' baseline $V(s_t | \bm{z})$ defined as $\mathbb{E}[\sum_{t=0}^\infty \gamma^t r_t | \bm{z}]$.
Intuitively this baseline captures the average reward incurred when experiencing a particular fixed $\bm{z}$ sequence.
We refer the reader to~\S\ref{s:z-baseline} for a formal discussion and analysis of input-dependent baselines.
Evaluating the baseline we get $V(s_t|\bm{z}) = \mathbb{E}[\sum_{t=0}^\infty \gamma^t r_t | \bm{z}] = \sum_{t=0}^\infty \gamma^t z_t$.
Therefore the variance of the policy gradient estimate in this case is
\begin{align}
V_2 = \mathrm{Var} \left[ \sum_{t=0}^\infty \frac{a_t}{2} \left( \sum_{t'=t}^\infty \gamma^{t'} r_{t'} - \sum_{t'=t}^\infty \gamma^{t'} z_{t'} \right) \right] = \mathrm{Var} \left[ \sum_{t=0}^\infty \frac{a_t}{2} \left( \sum_{t'=t}^\infty \gamma^{t'} a_{t'} \right) \right]. \label{eq: var input baseline}
\end{align}

\textbf{Reduction in variance.}
To analyze the variance reduction between the two cases (Equations~\eqref{eq: stand variance} and~\eqref{eq: var input baseline}), we note that
\begin{align}
V_1 = & V_2 + \mathrm{Var} \left[ \sum_{t=0}^\infty \frac{a_t}{2} \left( \sum_{t'=t}^\infty \gamma^{t'} z_{t'} \right) \right] + 2 \mathrm{Cov}\left( \sum_{t=0}^\infty \frac{a_t}{2} \left( \sum_{t'=t}^\infty \gamma^{t'} a_{t'} \right) , \sum_{t=0}^\infty \frac{a_t}{2} \left( \sum_{t'=t}^\infty \gamma^{t'} z_{t'} \right) \right) \label{eq: cov zero} \\
= & V_2 + \mathrm{Var} \left[ \sum_{t=0}^\infty \frac{a_t}{2} \left( \sum_{t'=t}^\infty \gamma^{t'} z_{t'} \right) \right]. \label{eq: cov diff}
\end{align}
This follows because
\begin{align}
\mathbb{E}\left[ \sum_{t=0}^\infty \frac{a_t}{2} \left( \sum_{t'=t}^\infty \gamma^{t'} z_{t'} \right) \right] = \sum_{t=0}^\infty \sum_{t'=t}^\infty \frac{\gamma^{t'}}{2} \mathbb{E}[a_t z_{t'}] = 0,\quad \text{and} \notag \\
\mathbb{E} \left[ \left( \sum_{t=0}^\infty \frac{a_t}{2} \left( \sum_{t'=t}^\infty \gamma^{t'} a_{t'} \right) \right) \left( \sum_{t=0}^\infty \frac{a_t}{2} \left( \sum_{t'=t}^\infty \gamma^{t'} z_{t'} \right) \right) \right] = \notag \\\sum_{t_1=0}^\infty \sum_{t_1'=t_1}^\infty \sum_{t_2=0}^\infty \sum_{t_2'=t_2}^\infty \mathbb{E} \left[ \frac{a_{t_1} a_{t_1'} a_{t_2} z_{t_2'}}{4}\gamma^{t_1' + t_2'} \right]  = 0. \notag
\end{align}
Therefore the covariance term in Equation~\eqref{eq: cov zero} is $0$.
Hence the variance reduction from Equation~\eqref{eq: cov diff} can be written as
\begin{align}
V_1 - V_2 &= \mathrm{Var} \left[ \sum_{t=0}^\infty \frac{a_t}{2} \left( \sum_{t'=t}^\infty \gamma^{t'} z_{t'} \right) \right] = \sum_{t_1=0}^\infty \sum_{t_1'=t_1}^\infty \sum_{t_2=0}^\infty \sum_{t_2'=t_2}^\infty \mathbb{E} \left[ \frac{a_{t_1} a_{t_2} z_{t_1'} z_{t_2'}}{4} \gamma^{t_1' + t_2'} \right] \notag \\
&= \sum_{t_1=0}^\infty \sum_{t_1'=t_1}^\infty  \mathbb{E} \left[ \frac{a_{t_1}^2 z_{t_1'}^2}{4} \gamma^{2t_1'} \right] = \frac{\mathrm{Var}(a_0)\mathrm{Var}(z_0)}{4(1-\gamma^2)^2}. \notag
\end{align}

Thus the input-dependent baseline reduces variance of the policy gradient estimate by an amount proportional to the variance of the external input.
In this toy example, we have chosen $z_t$ to be binary-valued, but more generally the variance of $z_t$ could be arbitrarily large and might be a dominating factor of the overall variance in the policy gradient estimation.


\section{Markov properties of input-driven decision processes}
\label{s:pf-input-driven-mdp}

 \begin{prop}
An input-driven decision process satisfying the conditions of case 1 in Figure~\ref{fig:graphical-model} is a fully observable MDP, with state $\tilde{s}_t := (s_t, z_t)$, and action $\tilde{a}_t := a_t$.
\end{prop}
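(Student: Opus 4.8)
The plan is to check, one item at a time, that the tuple $(\mathcal{S}\times\mathcal{Z}, \mathcal{A}, \tilde{\mathcal{P}}, \tilde{\rho}_0, \tilde{r}, \gamma)$ with combined state $\tilde{s}_t := (s_t, z_t)$ and action $\tilde{a}_t := a_t$ meets the definition of a fully observable MDP from \S\ref{notation}. Concretely, I must exhibit (i) a one-step transition kernel $\tilde{\mathcal{P}}(\tilde{s}_{t+1}\mid\tilde{s}_t,\tilde{a}_t)$ that depends on the past \emph{only} through the current combined state and action (the Markov property), (ii) a reward expressible as a function $\tilde{r}(\tilde{s}_t,\tilde{a}_t)$, (iii) an initial distribution over $\tilde{s}_0$, and (iv) full observability of $\tilde{s}_t$.

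The crux is (i). Reading off the graphical model in Figure~\ref{fig:graphical-model}{\color{darkred}a}, the only parents of $s_{t+1}$ are $(s_t, a_t, z_t)$ and the only parents of $z_{t+1}$ lie in the input history; since all these parents are in the conditioning set and there is no direct edge between $s_{t+1}$ and $z_{t+1}$, the two are conditionally independent given the past (d-separation). This yields the factorization
\begin{equation*}
\Pr(s_{t+1}, z_{t+1} \mid s_{0:t}, a_{0:t}, z_{0:t}) = \mathcal{P}_s(s_{t+1}\mid s_t, a_t, z_t)\,\mathcal{P}_z(z_{t+1}\mid z_{0:t}).
\end{equation*}
The first factor already depends only on $(\tilde{s}_t,\tilde{a}_t)$. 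For the second, I would invoke the defining assumption of case 1, namely that $z_t$ is itself Markov, so that $\mathcal{P}_z(z_{t+1}\mid z_{0:t}) = \mathcal{P}_z(z_{t+1}\mid z_t)$. Substituting gives a kernel
\begin{equation*}
\tilde{\mathcal{P}}(\tilde{s}_{t+1}\mid\tilde{s}_t,\tilde{a}_t) = \mathcal{P}_s(s_{t+1}\mid s_t, a_t, z_t)\,\mathcal{P}_z(z_{t+1}\mid z_t)
\end{equation*}
that is a function of $(\tilde{s}_t,\tilde{a}_t)$ alone, establishing the Markov property.

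The remaining items are routine. The reward $r(s_t,a_t,z_t)$ is a function of $\tilde{s}_t=(s_t,z_t)$ and $\tilde{a}_t=a_t$, so setting $\tilde{r}(\tilde{s}_t,\tilde{a}_t):=r(s_t,a_t,z_t)$ satisfies (ii); the initial state is $\tilde{s}_0=(s_0,z_0)$ with $s_0\sim\rho_0^s$ and $z_0\sim\rho_0^z$, satisfying (iii); and in case 1 the agent observes $\obs_t=(s_t,z_t)=\tilde{s}_t$, giving (iv). I would also note for consistency that a case-1 policy $\pi_\theta(a_t\mid s_t,z_t)$ is exactly a policy over the combined state $\tilde{s}_t$, so the construction is self-contained. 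The one genuine obstacle is (i): without the Markov assumption on $z$, the factor $\mathcal{P}_z(z_{t+1}\mid z_{0:t})$ depends on the entire input history and the combined state fails to be Markov\,---\,this is precisely why case 2, with general non-Markov $z$, must instead be treated as a POMDP.
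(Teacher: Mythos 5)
Your proof is correct and takes essentially the same route as the paper's: both establish the Markov property of the combined state $\tilde{s}_t = (s_t, z_t)$ by showing that $\Pr(s_{t+1}, z_{t+1} \mid s_{0:t}, z_{0:t}, a_{0:t})$ collapses to a function of $(s_t, z_t, a_t)$ alone, using the case-1 structure. You simply spell out the kernel factorization $\mathcal{P}_s(s_{t+1}\mid s_t,a_t,z_t)\,\mathcal{P}_z(z_{t+1}\mid z_t)$ and the Markov assumption on $z$ that the paper compresses into ``by definition of case 1,'' and additionally verify the routine reward, initial-distribution, and observability conditions that the paper leaves implicit.
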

\begin{proof}
\begin{align*}
\Pr(\tilde{s}_{t+1} | \tilde{s}_{0:t}, \tilde{a}_{0:t}) &= \Pr(s_{t+1}, z_{t+1}|s_{0:t},z_{0:t},a_{0:t}) \\
&= \Pr (s_{t+1}, z_{t+1} | s_t, z_t, a_t)  \quad \,\,\, \text{(by definition of case 1 in Figure~\ref{fig:graphical-model}{\color{darkred}a})} \\
&= \Pr (\tilde{s}_{t+1} | \tilde{s}_{t}, \tilde{a}_{t}).
\end{align*}
\end{proof}

\begin{prop}
An input-driven decision process satisfying the conditions of case 2 in Figure~\ref{fig:graphical-model}, with state $\tilde{s}_t := (s_t, z_{0:t})$ and action $\tilde{a}_t := a_t$ is a fully observable MDP. If only $\obs_t = s_t$ is observed at time $t$, it is a partially observable MDP (POMDP).
\end{prop}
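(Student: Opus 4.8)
The plan is to establish the two claims in turn, treating the augmented state $\tilde{s}_t := (s_t, z_{0:t})$ as carrying the \emph{entire} input history so that the non-Markovian nature of $\bm{z}$ is absorbed into the state. The first claim is proved by verifying the Markov property directly, paralleling the proof given for case 1 but with the enlarged state.

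First I would expand $\Pr(\tilde{s}_{t+1}\mid \tilde{s}_{0:t}, \tilde{a}_{0:t})$. Because $\tilde{s}_{0:t}$ encodes both $s_{0:t}$ and $z_{0:t}$ (later histories subsume earlier ones) and $\tilde{a}_{0:t} = a_{0:t}$, this equals $\Pr(s_{t+1}, z_{t+1}\mid s_{0:t}, z_{0:t}, a_{0:t})$, where I drop the redundant copy of $z_{0:t}$ appearing on the left. I would then factor the joint transition into a state term and an input term,
\[
\Pr(s_{t+1}\mid z_{t+1}, s_{0:t}, z_{0:t}, a_{0:t})\,\Pr(z_{t+1}\mid s_{0:t}, z_{0:t}, a_{0:t}),
\]
and reduce each using the input-driven MDP definition: the first factor collapses to $\mathcal{P}_s(s_{t+1}\mid s_t, a_t, z_t)$ and the second to $\mathcal{P}_z(z_{t+1}\mid z_{0:t})$. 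Since both depend only on quantities contained in $(\tilde{s}_t, \tilde{a}_t)$, the product equals $\Pr(\tilde{s}_{t+1}\mid \tilde{s}_t, \tilde{a}_t)$, establishing the Markov property. I would also note that the reward $r(s_t,a_t,z_t)$ is a function of $(\tilde{s}_t, \tilde{a}_t)$, so $(\tilde{s}_t, \tilde{a}_t)$ indeed defines a fully observable MDP.

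For the second claim I would invoke the standard description of a POMDP as an MDP equipped with an observation function. Having shown that the augmented process is a genuine MDP, restricting the agent's view to $\obs_t = s_t$ is exactly a deterministic observation map $\Omega(\obs_t\mid \tilde{s}_t) = \mathbf{1}[\obs_t = s_t]$ that projects out the hidden history $z_{0:t}$. This instantiation matches the POMDP template, so the claim follows.

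The main obstacle is the reduction of $\Pr(z_{t+1}\mid s_{0:t}, z_{0:t}, a_{0:t})$ to $\mathcal{P}_z(z_{t+1}\mid z_{0:t})$, which relies on the input process being exogenous\,---\,evolving independently of states and actions. This is precisely the independence built into the graphical model of Figure~\ref{fig:graphical-model} (case 2), and I would cite it explicitly; the remainder is bookkeeping around the fact that the full input history sits inside the state, which is what tames the non-Markovian input.
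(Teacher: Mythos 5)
Your proof is correct and follows essentially the same route as the paper's: expand the transition probability of the augmented state $\tilde{s}_t = (s_t, z_{0:t})$, factor it into a state term and an input term, and use the case-2 structure to drop the dependence on history beyond $(\tilde{s}_t, \tilde{a}_t)$. The only cosmetic difference is that you collapse the state factor all the way to $\mathcal{P}_s(s_{t+1}\mid s_t, a_t, z_t)$ (valid, since $z_{t+1}$ is a non-descendant of $s_{t+1}$ in case 2), whereas the paper keeps $z_{0:t+1}$ in the conditioning and recombines the two factors directly.
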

\vspace{-0.3cm}
\begin{proof}
\begin{align*}
	\Pr(\tilde{s}_{t+1} | \tilde{s}_{0:t}, \tilde{a}_{0:t}) &= \Pr(s_{t+1}, z_{0:t+1}|s_{0:t}, z_{0:t}, a_{0:t}) \\
	&= \Pr(s_{t+1}|s_{0:t}, z_{0:t+1}, a_{0:t})\Pr(z_{0:t+1}|s_{0:t}, z_{0:t}, a_{0:t})\\
	&=\Pr(s_{t+1}|s_t, z_{0:t+1}, a_t) \Pr(z_{0:t+1}|s_t, z_{0:t}, a_t)\;
	\text{(by definition of case 2 in Figure~\ref{fig:graphical-model}{\color{darkred}b})}\\
	&=\Pr(s_{t+1}, z_{0:t+1} | s_{t}, z_{0:t}, a_t)\\
	&=\Pr(\tilde{s}_{t+1}|\tilde{s}_t, \tilde{a}_t).
\end{align*}
Therefore, $(\tilde{s}_t, \tilde{a}_t)$ is a fully observable MDP. If only $\obs_t = s_t$ is observed, the decision process is a POMDP, since the $z_{0:t}$ component of the state is not observed.
\end{proof}

\section{Proof of Lemma 1}
\label{s:pf-markov-chain}
\begin{proof}
From the definition of an input-driven MDP (Definition~\ref{def:input-driven-process}), we have

\begin{align}
\Pr(z_{0:\infty}, \obs_t, a_t) = & \Pr(z_{0:t}, \obs_t, a_t) \Pr(z_{t+1:\infty}|z_{0:t}, \obs_t, a_t) \notag \\
= & \Pr(z_{0:t}, \obs_t) \Pr(a_t | z_{0:t}, \obs_t) \Pr(z_{t+1:\infty}|z_{0:t}) \notag \\
= & \Pr(z_{0:t}, \obs_t) \pi_\theta(a_t | \obs_t) \Pr(z_{t+1:\infty}|z_{0:t}) \notag \\
= & \Pr(z_{0:\infty}, \obs_t) \pi_\theta(a_t | \obs_t).
\end{align}
Notice that $\Pr(a_t | z_{0:t}, \obs_t) = \pi_\theta(a_t | \obs_t)$ in both the MDP and POMDP cases in Figure~\ref{fig:graphical-model}. By marginalizing over $z_{0:t-1}$ on both sides, we obtain the result:
\begin{align}
\Pr(z_{t:\infty}, \obs_t, a_t) = \Pr(z_{t:\infty}, \obs_t) \pi_\theta(a_t | \obs_t).
\end{align}
\end{proof}


\section{Proof of Lemma 2}
\label{s:pf-input-dependent-pg}

\begin{proof}
Expanding the Policy Gradient Theorem~\cite{rlbook}, we have
\begin{align}
\nabla_\theta \eta(\pi_\theta) = & \mathbb{E} \left[\sum_{t=0}^\infty\nabla_\theta\log\pi_\theta(a_t|\obs_t)\sum_{t'\geq t}\gamma^{t'}r_{t'}\right] \notag \\
= &\sum_{t=0}^\infty \mathbb{E}\left[\nabla_\theta\log\pi_\theta(a_t|\obs_t)\sum_{t'\geq t}\gamma^{t'}r_{t'}\right] \notag \\
= &\sum_{t=0}^\infty\left[\sum_{\obs, a, \bm{z}}\Pr(\obs_t = \obs, a_t = a, z_{t:\infty}=\bm{z})\nabla_\theta \log \pi_\theta(a|\obs)\mathbb{E} \left[\sum_{t'\geq t}\gamma^{t'}r_{t'} \vert \obs_t=\obs, a_t=a, z_{t:\infty}=\bm{z}\right] \right] \label{eq: pi indep z} \notag \\
= &\sum_{t=0}^\infty\left[\sum_{\obs, a, \bm{z}}\Pr(\obs_t = \obs, z_{t:\infty}=\bm{z})\pi_\theta(a|\obs)\nabla_\theta \log \pi_\theta(a|\obs)\mathbb{E} \left[\sum_{t'\geq t}\gamma^{t'}r_{t'} \vert \obs_t=\obs, a_t=a, z_{t:\infty}=\bm{z}\right] \right],
\end{align}
where the last step uses Lemma~\ref{lem: markov chain}. Using the definition of $Q(\obs, a, \bm{z})$, we obtain:
\begin{align}
\nabla_\theta \eta(\pi_\theta) = & \sum_{t=0}^\infty\left[\sum_{\obs, a, \bm{z}} \Pr(\obs_t = \obs, z_{t:\infty}=\bm{z})\pi_\theta(a|\obs)\nabla_\theta \log \pi_\theta(a|\obs) \gamma^t Q(\obs, a, \bm{z}) \right] \label{eq: q fn defn} \notag \\
= & \sum_{\obs, a,\bm{z} } \left[ \pi_\theta(a|\obs)\nabla_\theta \log \pi_\theta(a|\obs) Q(\obs, a, \bm{z}) \left[\sum_{t=0}^\infty\gamma^t\Pr(\obs_t=\obs, z_{t:\infty} = \bm{z})\right] \right] \notag \\
= & \sum_{\obs, a,\bm{z} }  \pi_\theta(a|\obs)\nabla_\theta \log \pi_\theta(a|\obs) Q(\obs, a, \bm{z}) \rho_\pi(\obs, \bm{z}) \notag \\
= & \mathbb{E}_{\substack{(\obs, \bm{z}) \sim \rho_\pi \\ a\sim\pi_\theta}}\Big[\nabla_\theta \log\pi_\theta(a|\obs)Q(\obs,a,\bm{z})\Big].
\end{align}
\end{proof}


%


\section{Proof of Theorem 2}
\label{s:pf-opt-input-baseline}
\begin{proof}
Let $G(\obs,a)$ denote $\nabla_\theta\log\pi_\theta(a|\obs)^T\nabla_\theta\log\pi_\theta(a|\obs)$. For any input-dependent baseline $b(\obs, \bm{z})$, the variance of the policy gradient estimate is given by
\begin{align*}
& \mathbb{E}_{\substack{(\obs, \bm{z}) \sim \rho_\pi \\ a\sim\pi_\theta}} \left[ \norm{ \nabla_\theta \log\pi_\theta(a|\obs) \left[ Q(\obs, a, \bm{z}) - b(\obs, \bm{z}) \right] - \mathbb{E}_{\rho_\pi, \pi_\theta} \left[ \nabla_\theta \log\pi_\theta(a|\obs) \left[ Q(\obs, a, \bm{z}) - b(\obs, \bm{z}) \right] \right] }_2^2 \right] \\
& = \mathbb{E}_{\rho_\pi, \pi_\theta}\Big[G(\obs,a)\big[Q(\obs, a, \bm{z}) - b(\obs, \bm{z})\big]^2\Big] - \norm{ \mathbb{E}_{\rho_\pi, \pi_\theta}\Big[\nabla_\theta\log\pi_\theta(a|\obs)\big[Q(\obs, a, \bm{z}) - b(\obs, \bm{z})\big]\Big]}_2^2\\
& = \mathbb{E}_{\rho_\pi, \pi_\theta}\Big[G(\obs,a)\big[Q(\obs, a, \bm{z}) - b(\obs, \bm{z})\big]^2\Big] - \norm{ \mathbb{E}_{\rho_\pi, \pi_\theta}\Big[\nabla_\theta\log\pi_\theta(a|\obs)Q(\obs, a, \bm{z})\Big]}_2^2 \quad \text{(due to Theorem~\ref{thm: bias free})}\\
& =  \mathbb{E}_{\rho_\pi, \pi_\theta}\left[G(\obs, a) Q(\obs, a, \bm{z})^2\right] - \norm { \mathbb{E}_{\rho_\pi, \pi_\theta}\left[\nabla_\theta\log\pi_\theta(a|\obs)Q(\obs, a, \bm{z})\right] }_2^2 \\
&~~~~+ \mathbb{E}_{\rho_\pi}\Big[\mathbb{E}_{a \sim \pi_\theta}\left[G(\obs, a)\given[\big]\bm{z}, \obs\right] b(\obs, \bm{z})^2 - 2 \mathbb{E}_{a \sim \pi_\theta}\left[G(\obs,a) Q(\obs, a, \bm{z})\given[\big]\obs,\bm{z}\right]b(\obs, \bm{z})\Big].
\end{align*}
Notice that the baseline is only involved in the last term in a quadratic form, where the second order term is positive.
To minimize the variance, we set baseline to the minimizer of the quadratic equation, i.e., $2\mathbb{E}_{a \sim \pi_\theta}\left[G(\obs,a)\given[\big]\obs, \bm{z}\right] b(\obs, \bm{z}) - 2 \mathbb{E}_{a \sim \pi_\theta}\left[G(\obs,a) Q(\obs, a, \bm{z})\given[\big]\obs,\bm{z}\right] = 0$ and hence the result follows.
\end{proof}

\section{Input-Dependent Baseline for TRPO}
\label{s:trpo-thm}

We show that the input-dependent baselines are bias-free for Trust Region Policy Optimization (TRPO)~\cite{trpo}.

\textbf{Preliminaries.} Stochastic gradient descent using Equation~\eqref{eq:pg} does not guarantee consistent policy improvement in complex control problems.
TRPO is an alternative approach that offers monotonic policy improvements,
and derives a practical algorithm with better sample efficiency and performance.
TRPO maximizes a surrogate objective, subject to a KL
divergence constraint:
\vspace{-0.2cm}
\begin{align}
\underset{\theta}{\text{maximize}} &\quad \mathbb{E}_{\substack{s\sim\rho_{\pi_{\text{old}}}\\ a\sim\pi_{\text{old}}}}\left[\frac{\pi_\theta(a | s)}{\pi_{\text{old}}(a|s)}Q_{\pi_\text{old}}(s, a)\right]\label{eq:trpo_obj} \\
\text{subject to} &\quad \mathbb{E}_{s\sim\rho_{\pi_{\text{old}}}}\left[D_{\text{KL}}\left(\pi_{\text{old}}(\cdot|s) || \pi_\theta(\cdot|s)\right)\right] \leq \delta,
\end{align}
in which $\delta$ serves as a step size for policy update. Using a baseline in the TRPO objective, i.e. replacing $Q_{\pi_\text{old}}(s, a)$ with $Q_{\pi_\text{old}}(s, a)-b(s)$, empirically improves policy performance~\cite{gae}.



Similar to Theorem~\ref{lem: imdp pg}, we generalize TRPO to input-driven environments,
with $\rho_{\pi}(\obs, \bm{z}) = \sum_{t=0}^\infty\left[\gamma^t\Pr(\obs_t=\obs, z_{t:\infty}=\bm{z})\right]$ denoting the discounted visitation frequency of the observation $\obs$ and input sequence $\bm{z}$, and $Q(\obs,a,\bm{z}) = \mathbb{E}\left[\sum_{l=0}^\infty\gamma^l r_{t+l}\given[\big]\obs_t = \obs, a_t = a, z_{t:\infty}=\bm{z}\right]$.
The TRPO objective becomes $\mathbb{E}_{(\obs, \bm{z})\sim\rho_{\text{old}}, a\sim\pi_{\text{old}}}\left[Q_{\pi_\text{old}}(\obs, a, \bm{z})\pi_\theta(a|\obs)/\pi_\text{old}(a|\obs)\right]$, and the constraint is $\mathbb{E}_{(\obs,\bm{z})\sim\rho_{\pi_{\text{old}}}}\left[D_{\text{KL}}\left(\pi_{\text{old}}(\cdot|s) || \pi_\theta(\cdot|s)\right)\right] \leq \delta$.
\begin{thm}
An input-dependent baseline does not change the optimal solution of the
optimization problem in TRPO, that is
$\text{argmax}_\theta \mathbb{E}_{(\obs, \bm{z})\sim\rho_{\text{old}}, a\sim\pi_{\text{old}}}\left[\frac{\pi_\theta(a|\obs)}{\pi_{\text{old}}(a|\obs)}Q_{\pi_\text{old}}(\obs, a, \bm{z})\right] = \text{argmax}_\theta \mathbb{E}_{(\obs, \bm{z})\sim\rho_{\text{old}}, a\sim\pi_{\text{old}}}\left[\frac{\pi_\theta(a | \obs)}{\pi_{\text{old}}(a|\obs)}\left(Q_{\pi_\text{old}}(\obs, a, \bm{z}) - b(\obs, \bm{z})\right)\right]$.
\end{thm}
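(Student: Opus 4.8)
The plan is to show that the two objectives in the claimed identity differ only by an additive term that is constant in $\theta$, so that their maximizers over the common KL-constrained feasible set coincide. Because the KL-divergence constraint involves only $\pi_\theta$ and $\pi_{\text{old}}$ and is untouched by the baseline, it suffices to analyze the objectives themselves; the feasible region is identical in both problems.

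First I would form the difference of the two objectives, which by linearity of expectation is exactly
\[
\Delta(\theta) = \mathbb{E}_{\substack{(\obs, \bm{z})\sim\rho_{\text{old}}\\ a\sim\pi_{\text{old}}}}\left[\frac{\pi_\theta(a|\obs)}{\pi_{\text{old}}(a|\obs)}\,b(\obs, \bm{z})\right],
\]
and the goal reduces to proving that $\Delta(\theta)$ does not depend on $\theta$.

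Next I would expand the expectation as a sum over $\obs$, $\bm{z}$, and $a$, weighting the inner action sum by $\pi_{\text{old}}(a|\obs)$. The crucial structural observation\,---\,the analogue of the step $\sum_a \pi_\theta \nabla_\theta \log \pi_\theta = 0$ used in Theorem~\ref{thm: bias free}\,---\,is that the baseline $b(\obs, \bm{z})$ depends on $\obs$ and $\bm{z}$ but \emph{not} on the action $a$, so it factors out of the inner sum. The importance-sampling ratio then telescopes: $\sum_a \pi_{\text{old}}(a|\obs)\,\frac{\pi_\theta(a|\obs)}{\pi_{\text{old}}(a|\obs)} = \sum_a \pi_\theta(a|\obs) = 1$ for every $\theta$, since $\pi_\theta(\cdot|\obs)$ is a probability distribution. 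Hence
\[
\Delta(\theta) = \sum_{\obs, \bm{z}} \rho_{\text{old}}(\obs, \bm{z})\, b(\obs, \bm{z}) = \mathbb{E}_{(\obs, \bm{z})\sim\rho_{\text{old}}}\left[b(\obs, \bm{z})\right],
\]
which carries no dependence on $\theta$.

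Finally, since the two objectives differ by the $\theta$-independent constant $\Delta(\theta)$ and are maximized over the identical feasible region carved out by the KL constraint, their $\text{argmax}$ sets coincide, establishing the claim. I do not expect a genuine obstacle: the argument is in fact lighter than Theorem~\ref{thm: bias free}, since no differentiation is needed and the normalization $\sum_a \pi_\theta(a|\obs)=1$ does the work that $\nabla_\theta \sum_a \pi_\theta(a|\obs)=0$ did there. The only points demanding care are confirming that the baseline genuinely does not depend on $a$\,---\,exactly what lets it leave the inner sum\,---\,and noting that both the visitation measure $\rho_{\text{old}}(\obs, \bm{z})$ and the KL constraint are independent of the baseline, so nothing outside $\Delta(\theta)$ is altered.
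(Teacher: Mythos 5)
Your proposal is correct and follows essentially the same route as the paper's proof: expand the expectation of the baseline term, cancel the importance ratio $\pi_\theta/\pi_{\text{old}}$ against the $\pi_{\text{old}}$ weighting, and use $\sum_a \pi_\theta(a|\obs) = 1$ to conclude the term is a $\theta$-independent constant. Your added remark that the KL-constrained feasible set is untouched by the baseline is a point the paper leaves implicit, but it is not a different argument.
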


\begin{proof}
\begin{align*}
\mathbb{E}_{(\obs, \bm{z})\sim\rho_{\text{old}}, a\sim\pi_{\text{old}}}\left[\frac{\pi_\theta(a|\obs)}{\pi_{\text{old}}(a|\obs)}b(\obs, \bm{z})\right]
=\;&\sum_\obs\sum_{\bm{z}}\rho_{\text{old}}(\obs, \bm{z})\sum_a\pi_{\text{old}}(a|\obs)\left[\frac{\pi_\theta(a|\obs)}{\pi_{\text{old}}(a|\obs)}b(\obs, \bm{z})\right]\\
=\;&\sum_\obs\sum_{\bm{z}}\rho_{\text{old}}(\obs, \bm{z})\sum_a\pi_\theta(a|\obs)b(\obs, \bm{z})\\
=\;&\sum_\obs\sum_{\bm{z}}\rho_{\text{old}}(\obs, \bm{z})b(\obs, \bm{z}),
\end{align*}
which is independent of $\theta$. Therefore, $b(\obs, \bm{z})$ does not change the optimal solution to the optimization problem.
\end{proof}

\section{Input-dependent baseline with LSTM}
\label{s:lstm}
\begin{figure}[t]
  \centering
  \includegraphics[width=0.85\columnwidth]{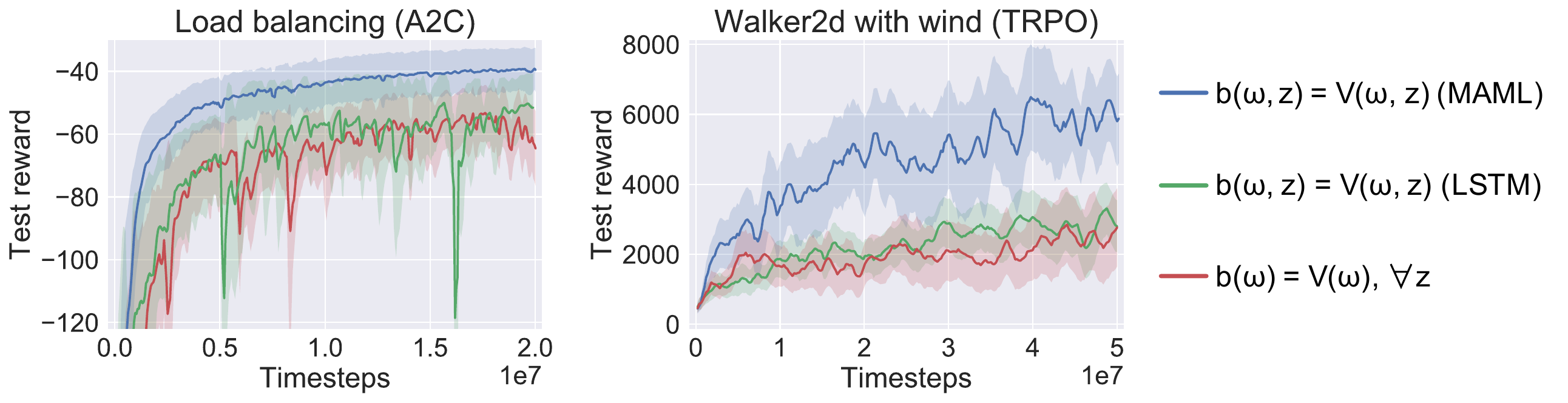}
  \caption{An LSTM-based input-dependent baseline (green) does not provide significant performance gain over
  		   standard state-dependent baseline (red) for load balancing and Walker2d with wind environments.}
  \label{fig:lstm}
\end{figure}
The input-dependent baseline is a function of both the state and the entire future input sequence.
A natural approach to approximate such baselines is to use neural models that
operate on sequences (e.g., LSTMs~\cite{lstm}).
However, learning a sequential mapping in a high-dimensional space can be expensive~\cite{nmt}.
For example, consider the LSTM input-dependent baseline with A2C on the load balancing
environment~(Figure~\ref{fig:environments}{\color{darkred}a}; \S\ref{s:discrete}) and
TRPO on the Walker2d with wind environment~(Figure~\ref{fig:environments}{\color{darkred}c};
\S\ref{s:continuous}). As shown in Figure~\ref{fig:lstm}, the LSTM input-dependent baseline (green) does
not significantly improve the policy performance over a standard state-only baseline (red) in these environments. By contrast, the MAML based input-dependent baseline~(\S\ref{s:algo}) reduces the variance in policy gradient estimation much more effectively and achieves a consistently better policy performance.

%


\section{Additional POMDP experiment}
\label{s:pomdp}
\begin{wrapfigure}{r}{0.3\textwidth}
  \centering
  \includegraphics[width=0.3\textwidth]{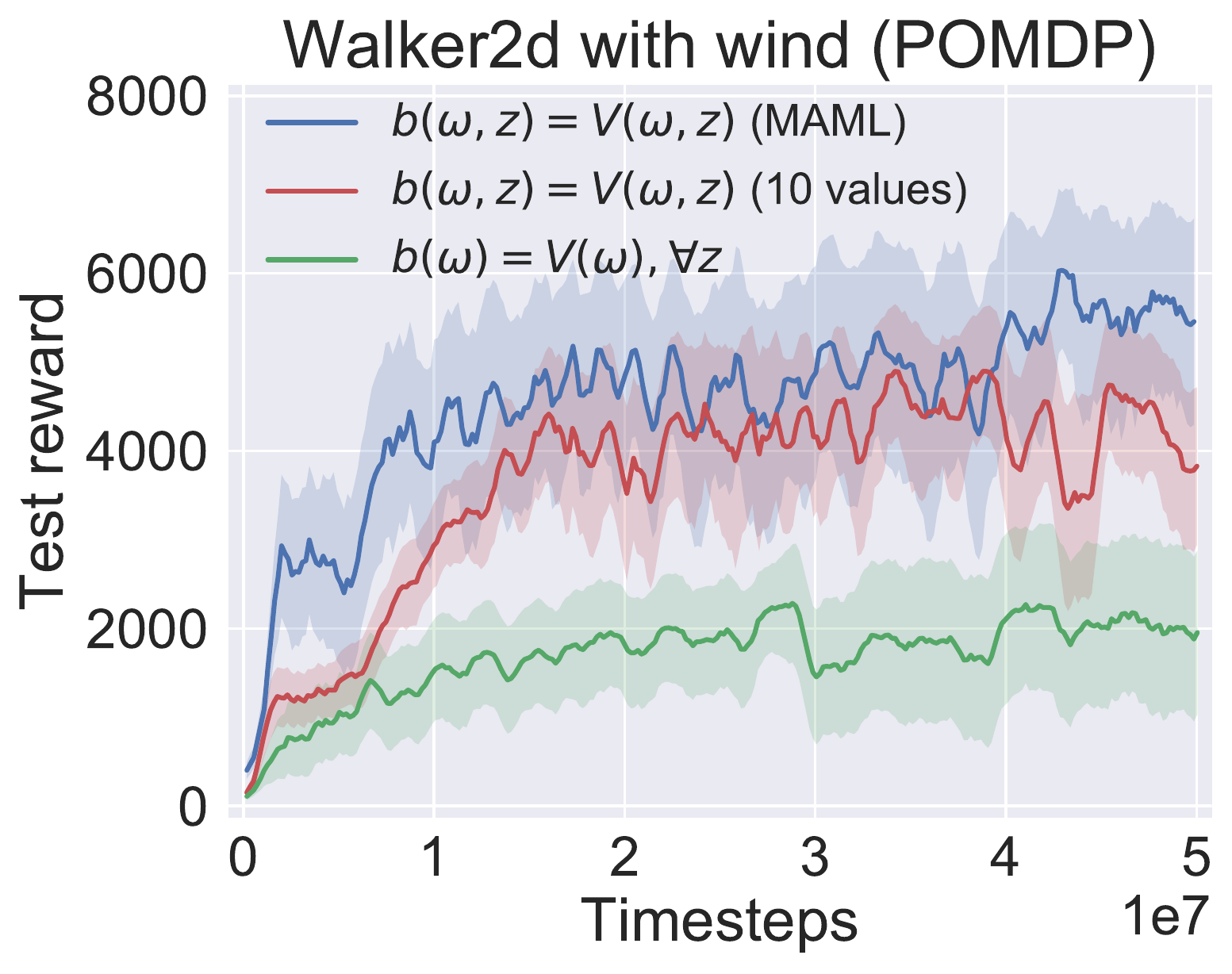}
  \caption{Input-dependent baseline improves TRPO performance in the POMDP version of the Walker2d with wind environment.}
  \label{f:pomdp-walker}
\end{wrapfigure}
Recall that the HalfCheetah on floating tiles environment~(Figure~\ref{fig:environments}{\color{darkred}d})
is a POMDP, since the agent does not observe the buoyancy of the tiles.
Figure~\ref{fig:trpo} (middle) shows that the input-dependent baseline significantly improves the TRPO performance for this POMDP environment.
We also created a POMDP version of the Walker2d with wind
environment~(Figure~\ref{fig:environments}{\color{darkred}c}), where the force of the wind is
removed from the observation provided to the agent.
The results of repeating the same training are shown in Figure~\ref{f:pomdp-walker}.
We make two key observations: (1) Compared with the MDP case in Figure~\ref{fig:trpo} (left), the performance
drops slightly overall. This is expected because the agent can react more agilely if it directly observes
the current wind in the MDP case. (2) Input-dependent baselines reduce variance and improve
the policy performance, with the MAML-based approach achieving the best performance, similar to the MDP case.


\section{Pseudocode for training multi-value baselines}
\label{s:multivalue}
In~\S\ref{s:algo}, we explained the idea of efficiently computing input-dependent baselines~(\S\ref{s:var-reduction}) using multiple value networks on a fixed set of input sequences. Algorithm~\ref{alg:multivalue} depicts the details of this approach.
\begin{algorithm}[h]
\caption{Training multi-value baselines for policy-based methods.}
\label{alg:multivalue}
\begin{algorithmic}[1]
\REQUIRE pregenerated input seuqnces $\{\bm{z}_1, \bm{z}_2, \cdots, \bm{z}_N\}$, step sizes $\alpha, \beta$
\STATE Initialize value network parameters $\theta_{V_1}, \theta_{V_1}, \cdots, \theta_{V_N}$ and policy parameters $\theta$
\WHILE{not done}
\STATE Sample a input sequence $\bm{z}_i$
\STATE Sample $k$ rollouts $\mathcal{T}_1, \mathcal{T}_2, ..., \mathcal{T}_{k}$ using policy $\pi_\theta$ and input sequence $\bm{z}_i$
\STATE Update policy with Equation~\eqref{eq:input-pg} using baseline estimated with $\theta_{V_i}$
\STATE Update $i$-th value network parameters: $\theta_{V_i} \leftarrow \theta_{V_i} -\beta \nabla_{\theta_{V_i}}\mathcal{L}_{1:k}\left[V_{\theta_{V_i}}\right]$
\ENDWHILE
\end{algorithmic}
\end{algorithm}

\section{Setup for Discrete-Action Environments}
\label{s:env-setup}

\textbf{Load balancing across servers (Figure~\ref{fig:environments}{\color{darkred}a}).} In this environment, an RL agent balances jobs over $k$ servers to minimize the average job completion time.
Similar to \S\ref{s:example}, the job sizes follow a Pareto distribution (scale $x_m = 100$, shape $\alpha = 1.5$), and jobs arrive in a Poisson process ($\lambda= 55$). We run over 10 simulated servers with different processing rates, ranging linearly from 0.15 to 1.05.
In this setting, the load of the system is at 90\% (i.e., on average, 90\% of the queues are non-empty).
%
In each episode, we generate 500 jobs as the exogenous input process.
The problem of minimizing average job completion time on servers with heterogeneous processing rates does not have a closed-form solution~\cite{load-balance-opt}; the most widely-used heuristic is to join the shortest queue~\cite{sjf-opt}. However, understanding the workload pattern can give a better policy; for example, we can reserve some servers for small jobs. In this environment, the observed state is a vector of $(j, q_1, q_2, ..., q_k)$, where $j$ is the size of the incoming job, $q_i$ is the amount of work currently in each queue. The action $a \in \{1, 2, ..., k\}$ schedules the incoming job to a specific queue. The reward is the number of active jobs times the negated time elapsed since the last action.

\textbf{Bitrate adaptation for video streaming (Figure~\ref{fig:environments}{\color{darkred}b}).} Streaming video over variable-bandwidth connections requires the client to adapt the video bitrates to optimize the user experience. This is challenging since the available network bandwidth (the exogenous input process) is hard to predict accurately. We simulate real-world video streaming using public cellular network data~\cite{norway} and video with seven bitrate levels and 500 chunks~\cite{dashrefclient}. The reward is a weighted combination of video resolution, time paused for rebuffering, and the number of bitrate changes~\cite{pensieve}. The observed state contains bandwidth history, current video buffer size, and current bitrate. The action is the next video chunk's bitrate. State-of-the-art heuristics for this problem conservatively estimate the network bandwidth and use model predictive control to choose the optimal bitrate over the near-term horizon~\cite{mpc}.


\section{Experiment Details}
\label{s:exp-details}

In our discrete-action environments (\S\ref{s:discrete}), we build 10-value networks and a meta-baseline using MAML~\cite{maml}, both on top of the OpenAI A2C implementation~\cite{openai-baselines}. We use $\gamma=0.995$ for both environments. The actor and the critic networks have 2 hidden layers, with 64 and 32 hidden neurons on each. The activation function is ReLU~\cite{relu} and the optimizer is Adam~\cite{adam}. We train the policy with 16 (synchronous) parallel agents. The learning rate is $1^{-3}$. The entropy factor~\cite{a3c} is decayed linearly from $1$ to $0.001$ over 10,000 training iterations. For the meta-baseline, the meta learning rate is $1^{-3}$ and the model specification has five step updates, each with learning rate $1^{-4}$. The model specification step in MAML is performed with vanilla stochastic gradient descent.

We introduce disturbance into our continuous-action robot control environments (\S\ref{s:continuous}). For the walker with wind (Figure~\ref{fig:environments}c), we randomly sample a wind force in $[-1,1]$ initially and add a Gaussian noise sampled from $\mathcal{N}(0,1)$ at each step. The wind is bounded between $[-10, 10]$. The episode terminates when the walker falls. For the half-cheetah with floating tiles, we extend the number of piers from 10 in the original environment~\cite{meta-adapt} to 50, so that the agent remains on the pathway for longer. We initialize the tiles with damping sampled uniformly in $[0, 10]$. For the 7-DoF robot arm environments, we initialize the target to randomly appear within $(-0.1, -0.2, 0.5), (0.4, 0.2, -0.5)$ in 3D. The position of the target is perturbed with a Gaussian noise sampled from $\mathcal{N}(0, 0.1)$ in each coordinate at each step. We bound the position of the target so that it is confined within the arm's reach. The episode length of all these environments are capped at 1,000.

We build the multi-value networks and meta-baseline on top of the TRPO implementation by OpenAI~\cite{openai-baselines}. We turned off the GAE enhancement by using $\lambda=1$ for fair comparison. We found that it makes only a small performance difference (within $\pm 5\%$ using $\lambda = \{0.95, 0.96, 0.97, 0.98, 0.99, 1\}$) in our environments. We use $\gamma=0.99$ for all three environments. The policy network has two hidden layers, with 128 and 64 hidden neurons on each. The activation function is ReLU~\cite{relu}. The KL divergence constraint $\delta$ is 0.01. The learning rate for value functions is $1^{-3}$. The hyperparameter of training the meta-baseline is the same as the discrete-action case.


\section{Input-dependent baselines with PPO}
\label{s:ppo}
\begin{figure}[t]
  \centering
  \includegraphics[width=1.0\columnwidth]{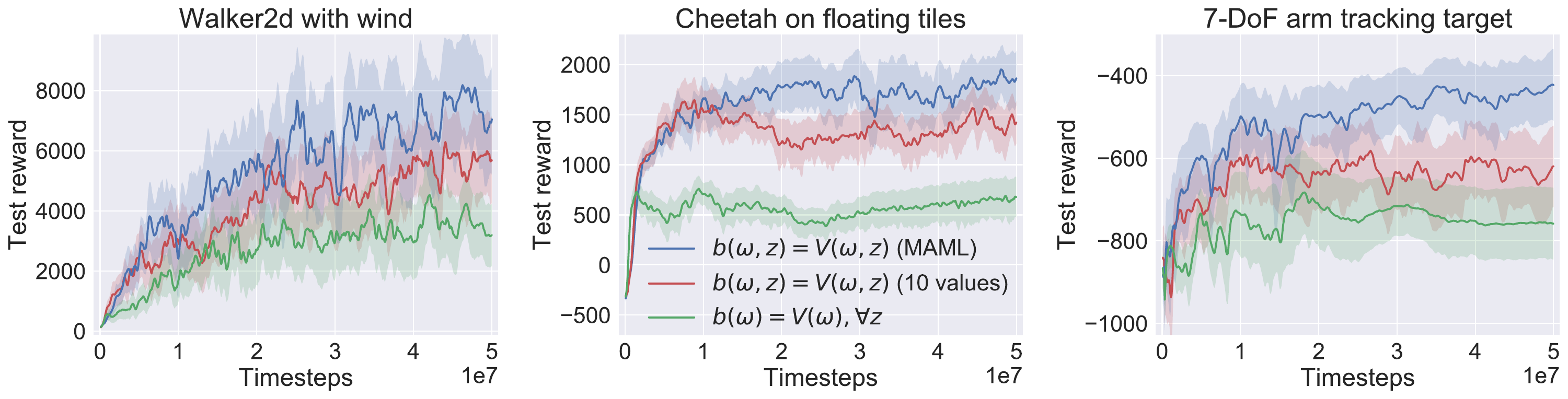}
  \caption{In continuous-action MuJoCo environments~(\S\ref{s:continuous}), PPO~\cite{ppo} with input-dependent baselines achieves 42\%--3.5$\times$
    better testing reward than PPO with a standard state-dependent baseline.
    Learning curves are on 100 testing episodes with unseen input sequences; shaded area spans one standard deviation.}
  \label{fig:ppo}
\end{figure}

Figure~\ref{fig:ppo} shows the results of applying input-dependent baselines on PPO~\cite{ppo}
in MuJoCo~\cite{mujoco} environments.
We make three key observations.
First, compared to Figure~\ref{fig:trpo} the best performances of PPO in these environments
(blue curves) are better than that of TRPO. This is as expected, because the variance of the
reward feedbacks in these environments is generally large and the reward clipping in PPO helps.
Second, input-dependent baselines improve the policy performance for all environments.
In particular, the meta-learning approach achieves the best performance, as it is not restricted to
a fixed set of input sequences during training~(\S\ref{s:algo}).
Third, the trend of learning curve is similar to that in TRPO~(Figure~\ref{fig:trpo}), which shows
our input-dependent baseline approach is generally applicable to a range of policy gradient based
methods~(e.g., A2C~(\S\ref{s:discrete}), TRPO~(\S\ref{s:continuous}), and PPO).


\section{Input-dependent baselines with RARL}
\label{s:rarl}

Our work is orthogonal and complementary to adversarial and robust reinforcement learning (e.g., RARL~\cite{rarl}).
These methods seek to improve policy robustness by co-training an adversary to generate a worst-case noise process,
whereas our work improves policy optimization itself in the presence of inputs like noise.
Note that if an adversary generates high-variance noise, similar to the inputs we consider in our
experiments~(\S\ref{s:eval}), techniques such RARL alone are not adequate to train good controllers.

To empirically demonstrate this effect, we repeat the Walker2d with wind experiment described
in~\S\ref{s:continuous}. In this environment, we add a noise (of the same scale as the original random walk)
on the wind and co-train an adversary to
control the strength and direction of this noise. We follow the training procedure described
in RARL~\cite[\S3.3]{rarl}.
\begin{figure}[t]
  \centering
  \includegraphics[width=1.0\columnwidth]{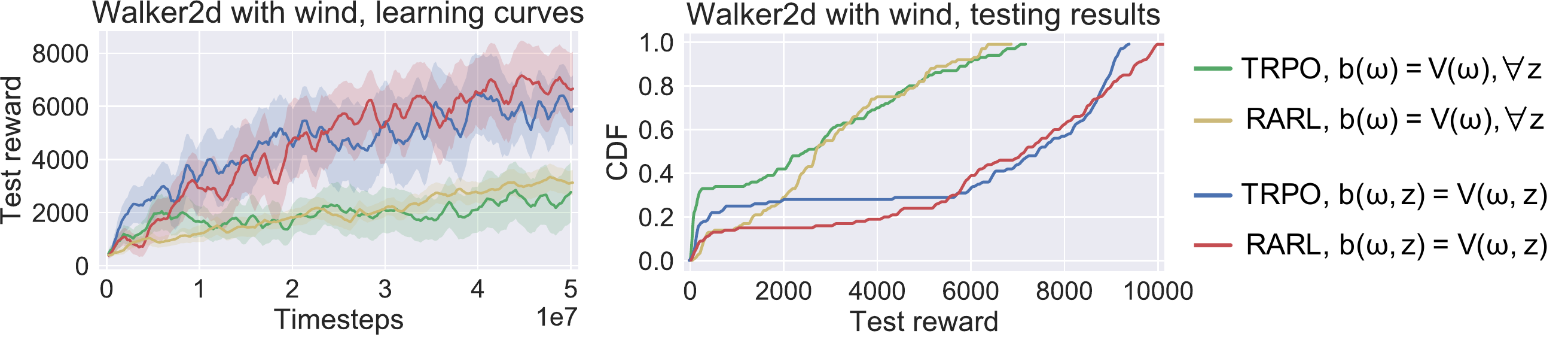}
  \caption{The input-dependent baseline technique is complementary and orthogonal to RARL~\cite{rarl}.
  The implementation of input-dependent baseline is MAML~(\S\ref{s:algo}).
  Left: learning curves of testing rewards; shaded area spans one standard deviation;
  the input-dependent baseline improves the policy optimization for both TRPO and RARL, while RARL
  improves TRPO in the Walker2d environment with wind disturbance.
  Right: CDF of testing performance; RARL improves the policy especially in the low reward region;
  applying the input-dependent baseline boosts the performance for both TRPO and RARL significantly
  (blue, red).}
  \label{fig:rarl}
\end{figure}

Figure~\ref{fig:rarl} depicts the results.
With either the standard state-dependent baseline or our input-dependent baseline,
RARL generally improves the robustness of the policy, as RARL achieves better testing rewards
especially in the low reward region
(i.e., compared the yellow curve to green curve, or red curve to blue curve in CDF of
Figure~\ref{fig:rarl}).
Moreover, input-dependent baseline significantly improves the policy optimization, which boosts
the performance of both TRPO and RARL (i.e., comparing the blue curve to the green curve, and the red
curve to the yellow curve).
Therefore, in this environment, the input-dependent baseline helps improve the policy optimization
methods and is complementary to adversarial RL methods such as RARL.


\section{Input-dependent baselines with meta-policy adaptation}
\label{s:mpo}

There has been a line of work focusing on fast policy
adaptation~\cite{meta-adapt, mbmpo, harrison_adapt}. For example, \citet{mbmpo}
propose a model-based meta-policy optimization approach (MB-MPO). It quickly learns the system dynamics using
supervised learning and uses the learned model to perform virtual rollouts for meta-policy adaptation.
Conceptually, our work differs because the goal is fundamentally different: our goal is to learn a
single policy that performs well in the presence of a stochastic input process, while MB-MPO aims to quickly
adapt a policy to new environments.
%
%
%

It is worth noting that the policy adaptation approaches are well-suited to handling model discrepancy
between training and testing.
However, in our setting, there exists no model discrepancy. In particular, the distribution of the input
process is the same during training and testing.
For example, in our load balancing environment
(Figure~\ref{fig:environments}{\color{darkred}a}, \S\ref{s:discrete}),
the exogenous workload process is sampled from the same distribution during training and testing.

Therefore our work is conceptually complementary to policy adaptation approaches.
Since some of these methods require a policy optimization step (e.g., \cite[\S4.2]{mbmpo}),
our input-dependent baseline can help these methods by reducing variance during training.
We perform an experiment to investigate this. Specifically, we apply the meta-policy adaptation
technique proposed by ~\citet{mbmpo} in our Walker2d environment with wind disturbance
(Figure~\ref{fig:environments}{\color{darkred}c}, \S\ref{s:continuous}).
For this environment, although the wind pattern is drawn from the same stochastic process (random walk),
we aim to adapt the policy to each particular instantiation of the wind.

Operationally, to reduce complexity, we bypass the supervised learning step for the system dynamics
and use the simulator to generate rollouts directly, since the interaction with the simulator is not
costly for our purpose and the state transition in our environment is not deterministic.
Following the meta-policy adaptation approach, the policy optimization algorithm is TRPO~\cite{trpo}.
The meta-policy adaptation algorithm is MAML~\cite{maml}. In particular, we performed ten gradient steps
to specialize the meta-policy for each instantiation of the input process. For input-dependent baseline,
we inherit our meta-baseline approach from~\S\ref{s:algo}. Similar to policy adaptation, we adapt our
meta-baseline alongside with the policy adaptation in the ten gradient steps for each input instance.
\begin{figure}[t]
  \centering
  \includegraphics[width=1.0\columnwidth]{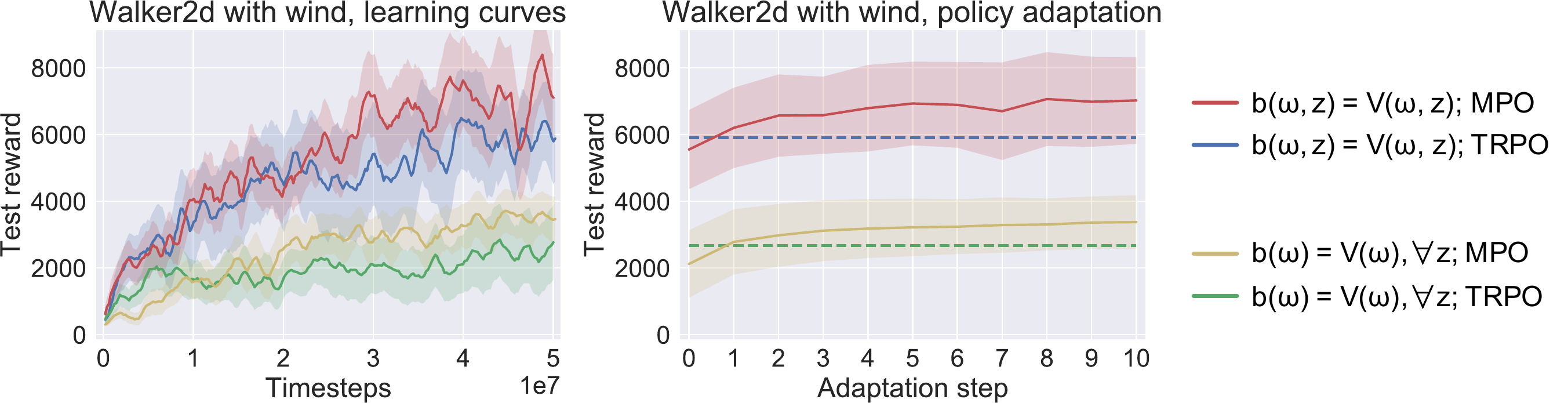}
  \caption{The input-dependent baseline technique is complementary to MPO~\cite{mbmpo}.
  The implementation of input-dependent baseline is MAML~(\S\ref{s:algo}).
  Left: learning curves in the testing Walker2d environment with wind disturbance;
  MPO is tested with adapted policy in each testing instance of the wind input;
  shaded area spans one standard deviation;
  the input-dependent baseline improves the policy optimization for both
  TRPO and MPO, while MPO improves TRPO.
  Right: meta policy adaptation at training timestep $5e7$;
  adapting the policy in specific input instances help boosting
  the performance (comparing yellow with green, and red with blue);
  applying input-dependent baseline generally improves the policy performance.}
  \label{fig:mpo}
\end{figure}

The results of our experiment is shown in Figure~\ref{fig:mpo}. The learning curve (left figure) shows the policy performance for 100 unseen test input sequences at each training checkpoint. We measure the performance of MPO
after ten steps of policy adaptation for each of the 100 input sequences.
As expected, policy adaptation specializes to the particular instance of the input process and
improves policy performance in the learning curve
(e.g., MPO improves over TRPO, as shown by the green and yellow learning curve).
However, policy adaptation does not solve the problem of variance caused by the input process, since the policy
optimization step within policy adaptation suffers from large variance.
Using an input-dependent baseline improves performance both for TRPO and MPO. Indeed, MPO trained with the input-dependent baseline (and adapted for each input sequence) outperforms the single TRPO policy, as shown by the red learning curve.

This effect is more evident in the policy adaptation curve (right figure). The policy adaptation curve shows the
testing performance of the adapted policy at each adaptation step
(the meta-policy is taken from the $5e7$ training timestep).
With an input-dependent baseline,
the meta policy already performs quite well at the 0\textsuperscript{th} step of policy adaptation (without any adaptation).
This is perhaps  unsurprising, since a single policy (e.g., the TRPO policy trained with input-dependent baseline) can achieve good performance in this environment.
However, specializing the meta-policy for each particular input instance further improves performance.


\end{document}